\documentclass[11pt]{article}

\usepackage[final]{acl}

\usepackage{times}
\usepackage{latexsym}

\usepackage[T1]{fontenc}

\usepackage[utf8]{inputenc}

\usepackage{microtype}

\usepackage{algorithm}
\usepackage{algorithmic}

\usepackage{mathtools}

\usepackage{amsmath}    
\usepackage{amssymb}    
\usepackage{amsthm}     

\usepackage{enumitem}

\usepackage{booktabs}       
\usepackage{multirow}
\usepackage{float}          

\usepackage[noabbrev,capitalise]{cleveref}

\newtheorem{theorem}{Theorem}
\newtheorem{proposition}[theorem]{Proposition}
\theoremstyle{remark}
\newtheorem{remark}[theorem]{Remark}

\usepackage{inconsolata}

\usepackage{graphicx}

\title{REIGN: Refurbished Embeddings with Integrated Guidance Networks for Efficient Context-Length Scaling}

\author{
 \textbf{Devrim Çavuşoğlu\textsuperscript{1,2}} \quad
 \textbf{Emre Akbaş\textsuperscript{1}}
\\
\\
 Middle East Technical University\textsuperscript{1},
 OBSS AI\textsuperscript{2}
}

\begin{document}
\maketitle
\begin{abstract}
Dense retrieval over long documents is expensive. Token-level
encoders scale quadratically in sequence length, and most long-context
embedding models reach 32K tokens only through architectural workarounds
or by stretching billion-parameter LLMs. We propose \textbf{REIGN}
(Refurbished Embeddings with Integrated Guidance Networks), a
contrastively trained bi-encoder that operates on sequences of
contextualised chunk embeddings from a frozen Guidance Network (GN)
rather than on raw tokens. REIGN targets multi-chunk inputs,
primarily for document-to-document retrieval; single-chunk inputs
stay with the GN. Decoupling token-level processing from
document-level reasoning, and caching the GN embeddings to disk, cuts
per-document training cost by roughly four orders of magnitude relative
to chunked Transformer fine-tuning. We also release a synthetic
long-document retrieval benchmark for contrastive training and
evaluation at long context lengths. Across an in-distribution Wikipedia
benchmark, the LoCo out-of-distribution suite, and a real-world patent
retrieval case study, REIGN matches dense long-context retrievers at
smaller parameter budgets in each regime. A paired significance test puts
it on par with models $1.6$--$4.3\times$ larger on the patent task, and
it stays within 0.65 nDCG@10 of a $20\times$-larger model on LoCo.
Our code is available at \url{https://github.com/devrimcavusoglu/reign}.
\end{abstract}

\section{Introduction}

Retrieval-augmented generation, multi-hop question answering, and document-level semantic search all need embeddings for inputs that exceed the context window of a standard Transformer. Efficient-attention variants \citep{beltagy2020longformer, zaheer2020big} and memory-augmented architectures extend the window, but embedding documents that run from tens of thousands to millions of tokens still hits memory and compute walls in practice.

Recent long-context embedding models (e.g., Nomic-Embed, \citealp{nussbaum2024nomic}; Jina-Embeddings-v3, \citealp{sturua2024jina}; M2-BERT, \citealp{fu2023monarch}) reach 8K--32K tokens via RoPE extensions, sparse attention, or structured sequence mixing. A parallel line instruction-tunes billion-scale LLMs into embedding models \citep{lee2025gemini, choi2024linqembedmistraltechnicalreport, zhang2025qwen3embeddingadvancingtext, SFRAIResearch2024}; these are accurate but computationally heavy, and beyond most groups' training budgets even with parameter-efficient adapters or quantisation. Both lines still take the sub-word token as the base unit of representation.

Data poses a complementary challenge. Existing embedding benchmarks target short passages and lack the semantically aligned long-document pairs and realistic hard negatives required for contrastive learning at scale. We address this with a synthetic long-document retrieval dataset derived from Wikipedia, in which LLM-generated rephrasals plus semantically injected distractors provide positive and graded negative pairs over documents averaging several thousand words (\Cref{sec:dataset}).

We propose \textbf{REIGN} (\textit{Refurbished Embeddings with Integrated Guidance Networks}), a contrastively trained bi-encoder for long-document retrieval. REIGN takes ToBERT's chunk-and-aggregate intuition~\citep{pappagari2019hierarchical} and combines it with SimCLR-style contrastive training~\citep{chen2020simple}. The one shift is the chunk encoder, a frozen pretrained text embedder (a \textit{Guidance Network}, GN; e.g., GTE or E5) whose outputs are cached. REIGN therefore (i)~scales to document-level inputs without changing the underlying architecture, and (ii)~trains over precomputed chunk embeddings, replacing repeated token-level forward passes with a single small Transformer over a short embedding sequence.

We evaluate REIGN on three retrieval regimes: the in-distribution synthetic Wikipedia benchmark, the out-of-distribution LoCo long-context suite \citep{saad2024benchmarking}, and a real-world patent prior-art case study \citep{ayaou2025dapfam}. A 55M-parameter REIGN+GTE-small beats every native long-context dense baseline we test on GoodWiki-Long and the LoCo-paper zero-shot baselines on LoCo, and a 357M REIGN+GTE-large is statistically indistinguishable from dense baselines $1.6$--$4.3\times$ larger on the patent task.

\section{Related Work}

We situate REIGN with respect to prior work along three axes: (i)~representation granularity, (ii)~long-context modelling, and (iii)~efficiency-oriented training.

\subsection{Tokenisation and Representation Granularity}

Subword tokenisers such as WordPiece \citep{wu2016google} and BPE \citep{sennrich2015neural} are the default input layer for Transformer encoders, but at the document scale the resulting token sequences make the attention cost prohibitive.

Modern embedding models such as E5 \citep{wang2022text}, GTE \citep{li2023towards}, and the systems on MTEB \citep{muennighoff2022mteb} still treat the subword token as the atomic input. REIGN instead treats the chunk-level embedding from a frozen Guidance Network (GN) as the atomic input, and trains only on top of those embeddings.

\subsection{Long-Context Modelling Strategies}

Hierarchical encoders split documents into chunks, encode each chunk independently, and aggregate. ToBERT \citep{pappagari2019hierarchical} and CogLTX \citep{ding2020cogltx} use this recipe for supervised classification over documents of a few hundred to a few thousand words. PARADE \citep{parade2023li} is the closest precedent for REIGN architecturally. It aggregates passage-level representations from a BERT-style encoder with a small Transformer, but it does so as a query--document cross-encoder for supervised re-ranking. SMITH \citep{yang2020smith}, a Siamese hierarchical encoder for document-to-document matching, is the closest in task. Its sentence- and document-level Transformers are trained jointly, need dedicated masked-LM pretraining, and cap inputs at 2{,}048 tokens. Where these systems train the hierarchy end-to-end, REIGN freezes the chunk encoder and caches its outputs, trains a bi-encoder under a contrastive objective for first-stage retrieval, and targets documents several times longer on average, with tails into the hundreds of thousands of tokens.

A separate line extends the Transformer itself to longer contexts via sparse or structured attention (Longformer and LED \citep{beltagy2020longformer}, BigBird \citep{zaheer2020big}) or via sub-quadratic structured-matrix backbones, as in M2-BERT \citep{fu2023monarch}, which fits an 8K--32K-token encoder into 80M parameters and is the strongest reference on LoCo when trained in-distribution \citep{saad2024benchmarking}. These approaches change the attention machinery; REIGN keeps a standard Transformer over chunk embeddings and pushes the long-context cost outside the training loop.

The closest settings to ours are therefore SMITH-style Siamese document matching and PARADE-style passage aggregation, but with a frozen, cached chunk encoder, at first-stage retrieval scale, and on documents averaging 5{,}065 words versus ToBERT's largest 1{,}788-word corpus \citep{cieri2004fisher}.

\subsection{Efficient and Lightweight Training}

Parameter-efficient tuning methods (Adapters \citep{houlsby2019parameter}, LoRA \citep{hu2022lora}, QLoRA \citep{dettmers2023qlora}, DoRA \citep{liu2024dora}) cut the trainable parameter count when adapting an LLM, but each training step still does a full token-level forward pass over the document. REIGN cuts cost on the other axis. Caching the GN forward passes removes the expensive computation from the training loop entirely, so the trainable encoder only ever sees a short, fixed-length sequence of chunk embeddings.

\section{Dataset}\label{sec:dataset}

\subsection{Challenges with Existing Datasets}

Existing embedding benchmarks target short-text similarity at the
sentence or paragraph level; document-level extensions rarely exceed
a few thousand tokens and lack the structural supervision long-context
models such as REIGN require. Although large long-form corpora
(Wikipedia \citep{wikidump}, Common Crawl
\citep{commoncrawl_cc_main_2024_10}, BookCorpus
\citep{zhu2015aligning}, Project Gutenberg
\citep{faysse_project_gutenberg_2023}) provide text, they do not
contain the semantically aligned long-document pairing required for
contrastive learning, and often exhibit noisy formatting or multi-topic
mixing.

\begin{figure*}[t]
  \centering
  \includegraphics[width=\linewidth]{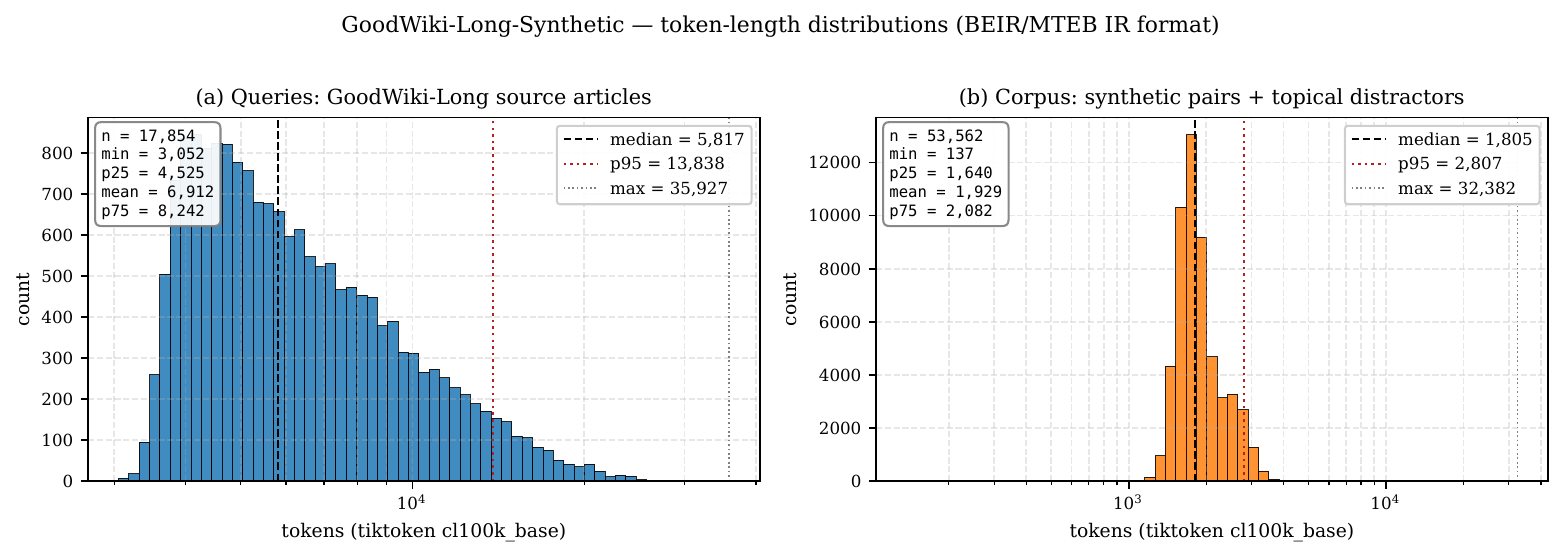}
  \caption{\textbf{Token-length distributions of
  \textsc{GoodWiki-Long-Synthetic}} (\texttt{tiktoken cl100k\_base}).
  \textbf{(a) Queries} ($n{=}17{,}854$) are GoodWiki-Long source
  articles, long-tailed (median $\sim$5.8K, p95 $\sim$13.8K).
  \textbf{(b) Corpus} ($n{=}53{,}562$) holds LLM rephrasals and topical
  distractors near a 1.8K-token median. The asymmetry puts the
  long-context burden on the query side, the regime REIGN's
  cross-chunk encoder targets.}
  \label{fig:goodwiki_long_hist}
\end{figure*}

\subsection{Synthetic Long-Document Dataset}

We construct a long-document retrieval dataset by rephrasing long-form
Wikipedia articles with GPT-4o-mini \citep{openai2024gpt4ocard}, so each
document and its rephrasal form a semantically aligned positive pair that
supports contrastive learning over long contexts. We build on GoodWiki
\citep{GoodWiki}, a cleaned Wikipedia release in structured markdown,
and filter for articles exceeding 16{,}000 characters ($\approx$4{,}000
tokens, following \citealp{tokencount2024openai}); we refer to this
subset as \textsc{GoodWiki-Long}.

To introduce harder negative cases, we augment positives with
semantically overlapping distractors inspired by HotpotQA
\citep{yang2018hotpotqa}, yielding a graded relevance structure.
During training, fully relevant pairs
enter the objective as positives and distractors as partial matches at
a reduced target weight (\Cref{ssec:objective}); at evaluation time we
use the graded labels to assess ranking robustness.

\paragraph{Open-access contribution.}
We publicly release \textsc{GoodWiki-Long-Synthetic} in the canonical
BEIR/MTEB tri-config
layout,\footnote{\url{https://huggingface.co/datasets/devrim/goodwiki_long_synthetic_ir}}
together with the full training and evaluation
code.\footnote{\url{https://github.com/devrimcavusoglu/reign}}
We refer to this benchmark as \textsc{GoodWiki-Long} in the
experiments. Pool sizes, qrels splits, and
document-length summary statistics are in
\Cref{tab:goodwiki_long_synthetic_ir}; full token-length distributions
appear in \Cref{fig:goodwiki_long_hist}
(\texttt{tiktoken cl100k\_base}, \citealp{tiktoken2023}).

\begin{table}[t]
  \centering
  \caption{Statistics for \textsc{GoodWiki-Long-Synthetic} in the
  BEIR/MTEB IR layout. Per-split rows are query-disjoint qrels, and
  length stats use \texttt{tiktoken cl100k\_base}.\\
  score~$=$~2 marks the LLM-rephrased positive, score~$=$~1 a topical
  distractor.}
  \label{tab:goodwiki_long_synthetic_ir}
  \scriptsize
  \setlength{\tabcolsep}{3pt}
  \begin{tabular}{lrrrr}
    \toprule
    \multicolumn{5}{l}{\emph{Pools}} \\
    \multicolumn{4}{l}{Queries (long-form Wikipedia)} & 17{,}854 \\
    \multicolumn{4}{l}{Corpus (positives $+$ distractors)} & 53{,}562 \\
    \multicolumn{4}{l}{\quad of which positives ($\text{score}{=}2$, one per query)} & 17{,}854 \\
    \multicolumn{4}{l}{\quad of which distractors ($\text{score}{=}1$, $\approx2$ per query)} & 35{,}708 \\
    \midrule
    \multicolumn{5}{l}{\emph{Qrels splits (query-disjoint)}} \\
    Split & Queries & Qrels & score=2 & score=1 \\
    train &  10{,}000 & 30{,}000 & 10{,}000 & 20{,}000 \\
    val   &   2{,}000 &  6{,}000 &  2{,}000 &  4{,}000 \\
    test  &   5{,}854 & 17{,}562 &  5{,}854 & 11{,}708 \\
    \midrule
    \multicolumn{5}{l}{\emph{Document length, tiktoken cl100k\_base (median / p95 / max)}} \\
    \multicolumn{4}{l}{Query side} & 5{,}817 / 13{,}838 / 35{,}927 \\
    \multicolumn{4}{l}{Corpus side} & 1{,}805 / \phantom{0}2{,}807 / 32{,}382 \\
    \bottomrule
  \end{tabular}
\end{table}

\section{Method}\label{sec:method}

REIGN is trained under a SimCLR-style contrastive
scheme~\citep{chen2020simple}. Each document is transformed into
two semantically equivalent views via rephrasing, forming positive
pairs $(x, x^+)$; injected distractors act as graded partial matches
and other batch samples as negatives. Each view is encoded into
chunk-level embeddings by a frozen Guidance Network (GN) and refined
by the REIGN encoder under a three-way cosine embedding objective
(\Cref{ssec:objective}). \Cref{fig:reign_simclr} diagrams the full
pipeline.

\begin{figure}[t]
    \centering
    \includegraphics[width=0.75\linewidth]{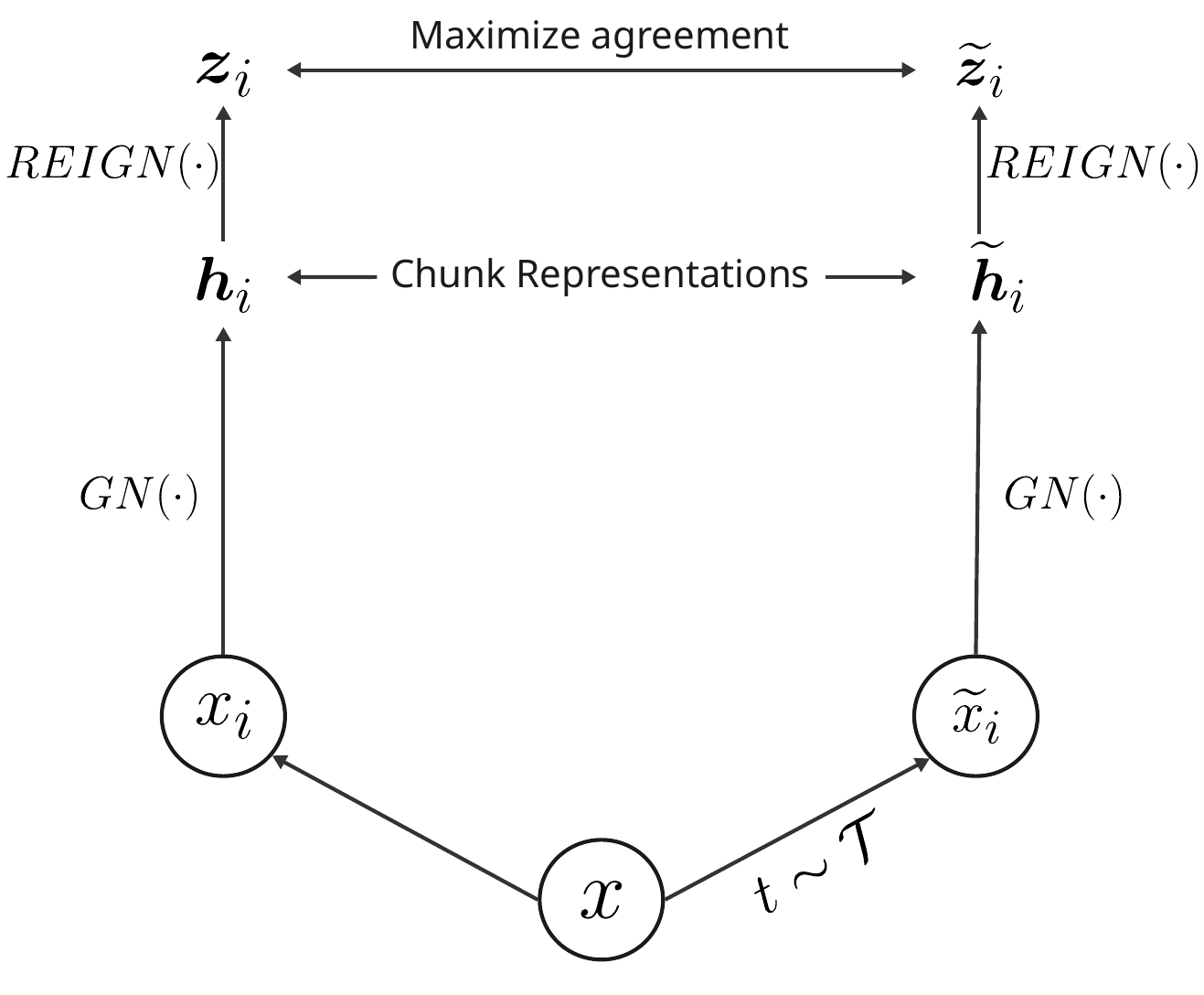}
    \caption{\textbf{Training pipeline of REIGN.}
    Each document $x$ is chunk-encoded by the frozen GN and refined by
    the REIGN encoder; semantic rephrasing ($t \sim \mathcal{T}$)
    generates positive pairs, and in-batch negatives enforce
    separation. Only the positive-pair path is drawn; injected
    distractors enter the objective as graded partials
    (\Cref{ssec:objective}).}
    \label{fig:reign_simclr}
\end{figure}

\subsection{Framework Overview}\label{ssec:framework}
REIGN treats a pretrained text encoder as a Guidance Network (GN)
that maps raw subword tokens to fixed-dimensional chunk embeddings;
the REIGN encoder then operates entirely at this coarser embedding
level, trading token detail for tractable context length
(\Cref{fig:strats}).

A textual input of up to $M$ tokens is segmented by a sliding window
of size $K$ (matching the GN's context window, typically $K=512$)
advanced with stride $S \le K$; consecutive windows overlap by $K-S$
tokens, with $S=K$ giving non-overlapping chunking. Each window $C_i$
is encoded as $E_i = \mathrm{GN}(C_i)$, yielding a sequence
$\{E_1, \dots, E_N\}$ with
\begin{equation}
N \;=\; \max\!\left(1,\; \left\lceil \tfrac{M - K}{S} \right\rceil + 1\right),
\end{equation}
which reduces to $\lceil M/K \rceil$ when $S=K$. GN outputs are
deterministic for a fixed GN/chunk-size/stride/input, so we
content-hash and cache them to disk (HDF5); during training only the
REIGN encoder runs, over a fixed-length embedding sequence. This
compresses the per-document cost from $O(M^2 d)$ at the token level
to $O(N^2 d)$ at the chunk level, so lightweight models can encode
inputs of $10^5{+}$ tokens; \Cref{apx:theory} formalises what the
move to chunk embeddings discards and what training can recover.

The REIGN encoder is a small Transformer applied to the
chunk-embedding sequence $\{E_1,\dots,E_N\}$ as a
permutation-equivariant set function with no positional encoding;
average pooling across chunks then makes the document representation
invariant to chunk order. Prior work and our own ablation support
this choice: chunk-level positional encoding gave ToBERT only marginal gains
\citep{pappagari2019hierarchical}, and in our own ablation learned
absolute positions are near parity on all three benchmarks while
sinusoidal encodings are worse
(\Cref{apx:pe_ablation}).\footnote{The ablation arms share a
controlled protocol (InfoNCE, batch 48, 20 epochs) distinct from the
released recipe; see \Cref{apx:pe_ablation}.} We therefore keep the
encoder free of positional parameters. A linear projection adapts the GN
output dimension to the encoder hidden width when they differ (e.g.\
$1024{\to}768$ for \textsc{GTE-large}). We sweep depth/width over
\textsc{tiny-l1} (0.56M), \textsc{small-l2} (3.85M), \textsc{base-l3}
(22.45M), \textsc{large-l4} (52.49M); see \Cref{apx:ablation_indist}
for the full sweep, and \Cref{apx:goodwiki_train} for training
hyperparameters.

\paragraph{Operating regime.}
An input that fits a single GN window ($M \le K$, i.e.\ $N{=}1$) is
\emph{short}. Its chunk sequence is one embedding, so the cross-chunk
encoder has nothing to aggregate. REIGN's operating regime therefore
starts at $N \ge 2$; at $K{=}512$ this means inputs beyond one
512-token chunk. Single-chunk inputs go to the bare GN, since on them
REIGN trails the GN by 5--7 nDCG@10 (\Cref{apx:mteb}) and adds an
encoder pass. In deployment
REIGN serves the long, multi-chunk side of a corpus alongside a
standard short-passage retriever.

\begin{figure}[!t]
\centering
\includegraphics[width=.85\linewidth]{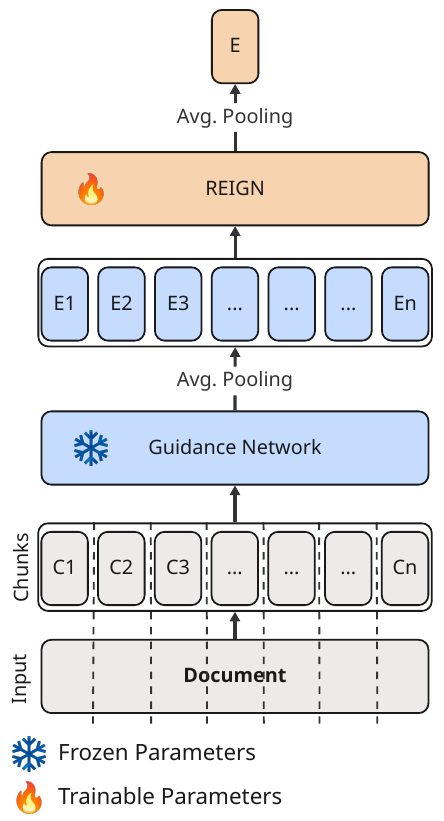}
\caption{\textbf{REIGN architecture.} A long document is chunked; the
frozen Guidance Network (GN) encodes each chunk and mean-pools its
token embeddings into one contextual chunk embedding
($E_1,\dots,E_n$); the trainable REIGN encoder aggregates across
chunks and average-pools into a single document representation~$E$.}
\label{fig:strats}
\end{figure}

\subsection{Training Objective}\label{ssec:objective}

REIGN is optimised with a three-way cosine embedding loss over
document pairs $(x, y)$ carrying graded targets
$s \in \{1, 0, -1\}$ (positive, partial, negative):
\begin{equation}
\mathcal{L}(x, y, s) =
\begin{cases}
1 - c & s = 1,\\
\lambda\,(1 - c) + (1 - \lambda)\,c_{+} & s = 0,\\
c_{+} & s = -1,
\end{cases}
\end{equation}
where $c = \cos(x, y)$, $c_{+} = \max(0, c)$, and the partial weight
is $\lambda = 0.5$. Each batch of 18 anchors pairs every anchor with
its rephrased positive, its two topical distractors as partials, and
17 in-batch negatives (360 pairs per step; full protocol in
\Cref{apx:goodwiki_train}). The choice of this objective over
InfoNCE~\citep{oord2018representation} is empirical. In a controlled
ablation this objective outperforms every InfoNCE variant we tried
(temperature, distractor handling, batch size, warm start, epochs) on
all three benchmarks (\Cref{apx:loss_ablation}). InfoNCE was also
brittle in this setting: at batch size 18 its validation score
degrades monotonically after the first epoch, and stable training
needs a much larger negative pool (\Cref{apx:loss_ablation}). The
pairwise objective has no such requirement. The \textsc{DAPFAM} fine-tuning
experiments (\Cref{ssec:dapfam}) instead use InfoNCE at temperature
0.07 over the dataset's provided negatives (\Cref{apx:dapfam}).

\subsection{Efficiency}\label{ssec:efficiency}

We compare REIGN's cost against representative long-context embedding
approaches analytically and empirically, under a fixed 8K-token,
512-token-chunk setting (\Cref{apx:efficiency},
\Cref{tab:flops_appendix_8k}). Token-level
Transformer models incur substantial overhead, linear in chunk count
for chunked inference and quadratic in sequence length for full-context
attention. REIGN's cost is instead dominated by a small embedding-level
encoder over a short sequence of chunk embeddings. With cached GN
embeddings, per-document training cost drops by nearly four orders of
magnitude relative to chunked Transformer fine-tuning. Even uncached,
REIGN matches standard chunked inference while enabling document-level
contrastive training that would otherwise be prohibitive.

End-to-end measurements on a single RTX~4090 agree with this
analysis (\Cref{tab:measured_efficiency} in \Cref{apx:efficiency}).
With cached GN embeddings REIGN answers
queries in 0.40--0.52\,ms versus 19.8--118.2\,ms with the GN run per
query ($49$--$229\times$), and the one-time cache build
(8.9--53.3\,ms per document) amortises almost immediately. Uncached,
REIGN runs at parity with its own chunked GN (ratio 0.86--0.97 over
identical GN work). Peak GPU memory spans 0.24--1.73\,GB across REIGN
configurations, versus 4.8--18.9\,GB for the native long-context
dense baselines (Jina-v3 at batch 4).

\section{Experiments}

We evaluate REIGN along three axes:
(i)~\textsc{GoodWiki-Long} (\Cref{ssec:exp_gw}, in-distribution
document-to-document retrieval, ``doc-to-doc'');
(ii)~\textsc{LoCo} (\Cref{ssec:loco}, OOD query$\to$passage, 12
subtasks across legal, scientific, and government text);
(iii)~\textsc{DAPFAM} (\Cref{ssec:dapfam}, OOD real-world patent
doc-to-doc).
Primary metrics are nDCG@10/@100 following
\citet{thakur2021beir, muennighoff2022mteb,saad2024benchmarking}.
One shared evaluation implementation scores every system (sparse,
dense, and REIGN) over identical qrels; on \textsc{GoodWiki-Long} the
graded labels enter nDCG with exponential gains
($2^{\mathrm{rel}}{-}1$), so every system receives the same partial
credit for distractors.
A short-context MTEB generalisation study is in \Cref{apx:mteb};
REIGN underperforms there by construction (see Limitations). The
encoder-capacity ablation that justifies our \textsc{base-l3}
paper-default is deferred to \Cref{apx:ablation_indist} and
\Cref{apx:ablation_ood_full}.

\begin{figure*}[t]
  \centering
  \includegraphics[width=\linewidth]{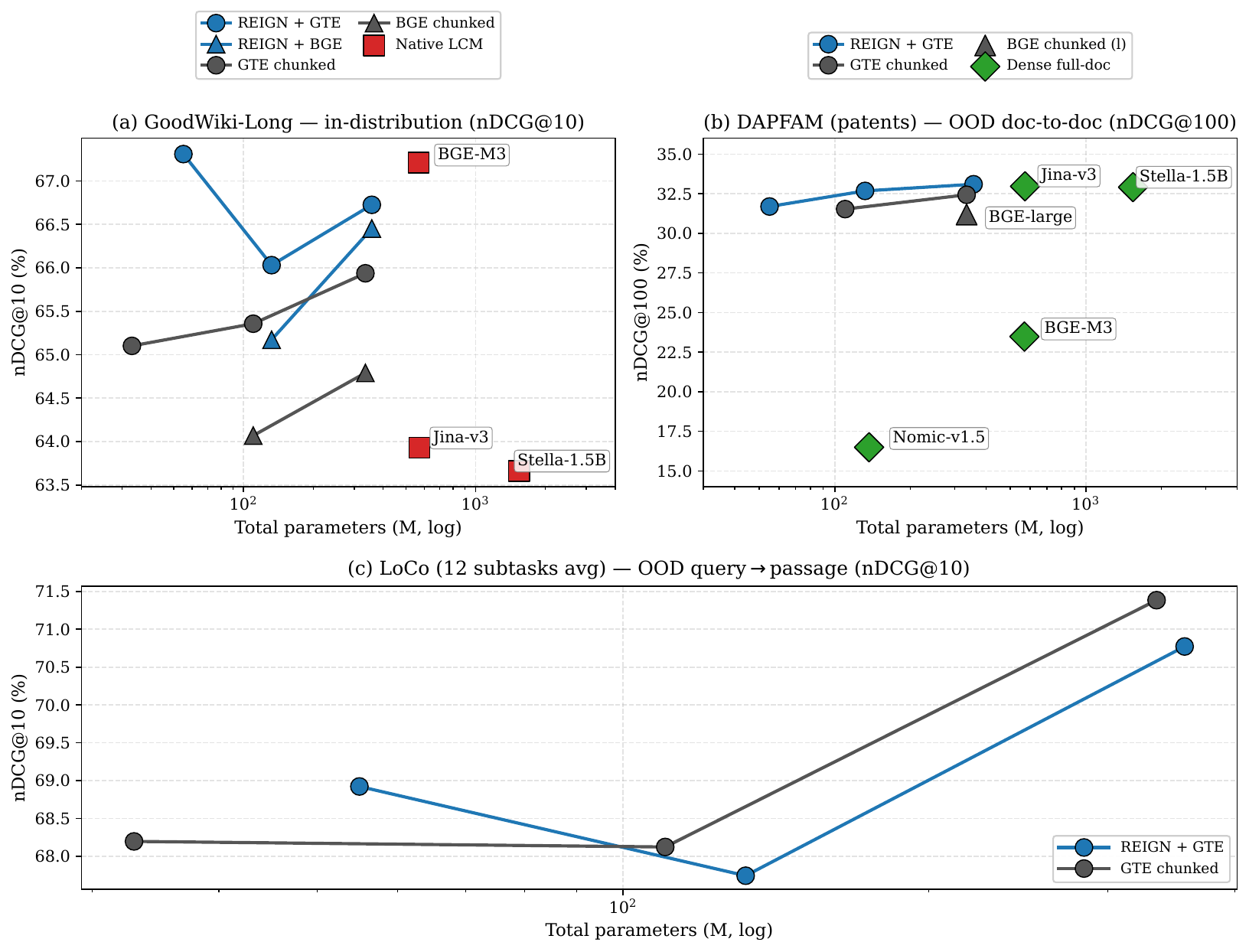}
  \caption{\textbf{Parameter efficiency of REIGN across three
  long-document retrieval regimes.} Total parameters (log-scale) vs.\
  primary nDCG; solid lines connect the three GN scales (small / base /
  large). \textbf{(a) GoodWiki-Long.} REIGN sits
  above the chunked baseline at every GN scale, and 55M REIGN+GTE-small
  beats the three plotted native long-context dense baselines
  (BGE-M3, Jina-v3, Stella-1.5B) at $10$--$28\times$ smaller scale;
  Nomic-Embed-v1.5 (11.05 nDCG@10) is omitted for scale.
  \textbf{(b) DAPFAM.} REIGN+GTE matches or exceeds chunked-GTE across
  scales; the strongest full-document dense baselines (Jina-v3,
  Stella-1.5B) reach comparable nDCG@100 with $1.6$--$4.3\times$ more
  parameters (\Cref{tab:dapfam_significance}). \textbf{(c) LoCo.} Bare
  GTE-chunked is at or above REIGN+GTE except at the small-GN
  end (see \Cref{ssec:loco}).}
  \label{fig:pareto_overview}
\end{figure*}

\subsection{Retrieval Accuracy on \textsc{GoodWiki-Long}}
\label{ssec:exp_gw}
\paragraph{Setup.}
\Cref{tab:gw_lcm_pareto} reports retrieval on the
\textsc{GoodWiki-Long} test split ($n_q{=}5{,}854$,
$n_{\text{corpus}}{=}53{,}562$) against sparse baselines (BM25,
TF-IDF), four native long-context dense baselines (BGE-M3,
\citealp{chen-etal-2024-m3}; Jina-Embeddings-v3, \citealp{sturua2024jina};
Nomic-Embed-v1.5, \citealp{nussbaum2024nomic}; Stella-en-1.5B-v5,
\citealp{stella2024}), the GN backbones (GTE, \citealp{li2023towards};
BGE, \citealp{xiao2024cpack}) used directly under truncated and
chunked mean-pool protocols, and a multi-GN REIGN sweep. Sparse retrievers
dominate raw nDCG@10 (BM25~76.82). \textsc{GoodWiki-Long}'s positives
are rephrased distillations of their queries, and the pairs retain
substantial lexical overlap, which favours bag-of-words.

\paragraph{Headline result.}
Within dense methods,
REIGN+GTE-small (55M; \Cref{fig:pareto_overview}a) tops nDCG@10,
edging BGE-M3 (568M)
by 0.10 with $10\times$ fewer parameters and beating Jina-v3 and
Stella-1.5B by 3.4--3.7~points, while exceeding the strongest bare-GN
chunked baseline (GTE-large, 335M) by 1.4. Chunked mean-pool buys
1.7--2.7~nDCG@10 over truncation across GNs; REIGN's cross-chunk
encoder adds a further 0.7--2.2 on top, with the largest lift at the
small-GN end.

\paragraph{GN scaling.}
Within REIGN, GoodWiki-Long performance is nearly saturated at
the small-GN end, since scaling the GN from 33M (\textsc{GTE-small})
to 335M (\textsc{GTE-large}) does not improve nDCG@10. REIGN is
therefore best paired with the smallest GN that crosses an
embedding-quality threshold.

\begin{table}[t]
\centering
\caption{\textbf{GoodWiki-Long test.}
$n_q{=}5{,}854$, $n_{\text{corpus}}{=}53{,}562$, top-$k{=}10$.
nDCG@10 in percent; $\Delta$ is the lift over the truncated
baseline of the same GN. REIGN uses the \textsc{base-l3}
encoder (22M) at the best-performing stride per GN.\\
\textbf{Bold} = best per column.\\
$^{\ddagger}$Nomic-Embed-v1.5's native training context is 2K tokens,
and \citet{nussbaum2024nomic} mark its 8K RoPE extension as
experimental; it collapses on documents whose median length already
exceeds 5K.}
\label{tab:gw_lcm_pareto}
\scriptsize
\setlength{\tabcolsep}{4pt}
\begin{tabular}{lrrr}
\toprule
\textbf{System} & \textbf{Params} & \textbf{nDCG@10} & \textbf{$\Delta$} \\
\midrule
\multicolumn{4}{l}{\emph{Sparse lexical}}\\
BM25                                & --            & \textbf{76.82} & --              \\
TF-IDF                              & --            & 74.19          & --              \\
\midrule
\multicolumn{4}{l}{\emph{Native long-context dense (truncate to 8K)}}\\
BGE-M3                              & 568M          & 67.21          & --              \\
Jina-Embeddings-v3                  & 572M          & 63.93          & --              \\
Stella-en-1.5B-v5                   & 1.54B         & 63.66          & --              \\
Nomic-Embed-v1.5\,$^{\ddagger}$     & 137M          & 11.05          & --              \\
\midrule
\multicolumn{4}{l}{\emph{Bare GN (truncated, 512)}}\\
GTE-small                           & 33M           & 62.37          & --              \\
GTE-base                            & 110M          & 63.48          & --              \\
GTE-large                           & 335M          & 63.58          & --              \\
BGE-base                            & 110M          & 61.95          & --              \\
BGE-large                           & 335M          & 63.12          & --              \\
\midrule
\multicolumn{4}{l}{\emph{Bare GN (chunked mean-pool, 512)}}\\
GTE-small                           & 33M           & 65.10          & $+2.73$         \\
GTE-base                            & 110M          & 65.36          & $+1.88$         \\
GTE-large                           & 335M          & 65.94          & $+2.36$         \\
BGE-base                            & 110M          & 64.07          & $+2.12$         \\
BGE-large                           & 335M          & 64.79          & $+1.67$         \\
\midrule
\multicolumn{4}{l}{\emph{REIGN (this work)}}\\
REIGN + GTE-small                   & 55M           & 67.31          & $\mathbf{+4.94}$ \\
REIGN + BGE-base                    & 132M          & 65.17          & $+3.22$         \\
REIGN + GTE-base                    & 132M          & 66.03          & $+2.55$         \\
REIGN + BGE-large                   & 357M          & 66.45          & $+3.33$         \\
REIGN + GTE-large                   & 357M          & 66.73          & $+3.15$         \\
\bottomrule
\end{tabular}
\end{table}

\subsection{Out-of-Distribution Long-Context Retrieval: LoCo}
\label{ssec:loco}
\paragraph{Setup.}
\Cref{tab:loco} reports REIGN zero-shot on LoCo
\citep{saad2024benchmarking}, an OOD suite of 12 long-context
retrieval subtasks (legal opinions, scientific QA, government
reports, summarisation, code search), each with its own queries,
corpus, and qrels. Following the benchmark convention we report
per-subtask nDCG@10, and compare systems on the 12-subtask
macro-average (\Cref{fig:pareto_overview}c). The REIGN encoder is the
\textsc{base-l3} checkpoint trained solely on \textsc{GoodWiki-Long},
paired with \textsc{GTE-small}, \textsc{GTE-base}, and
\textsc{GTE-large} at strides 384 and 512; no LoCo-specific training.

\begin{table*}[t]
\centering
\caption{\textbf{LoCoV1 per-subtask nDCG@10.}
Baselines reproduced from \citet{saad2024benchmarking}, Table~13.
``Ctx'' is the encoder's native context window.\\
$\infty^{\dagger}$ marks REIGN (chunked at 512, effective context
bounded only by GPU memory).\\
Above the midrule: zero-shot retrievers, \textbf{bold} = best per
column. M2-BERT below it was trained on data including the
LoCo training splits \citep{saad2024benchmarking}, so it is shown for
reference rather than as a like-for-like comparison; its large
legal-subtask margins come from in-distribution supervision.\\
Abbreviations: 2wMQA, CL\_H/T, GovR, LCR, MFQA, PR,
QasA/T, QMS, SO, SSFD = 2wikimqa, courtlistener\_HTML/Plain,
gov\_report, legal\_case\_reports, multifieldqa, passage\_retrieval,
qasper\_abstract/title, qmsum, stackoverflow, summ\_screen\_fd.}
\label{tab:loco}
\scriptsize
\setlength{\tabcolsep}{3pt}
\begin{tabular}{lrrrrrrrrrrrrrrr}
\toprule
\textbf{System} & \textbf{Params} & \textbf{Ctx} & \textbf{2wMQA} & \textbf{CL\_H} & \textbf{CL\_T} & \textbf{GovR} & \textbf{LCR} & \textbf{MFQA} & \textbf{PR} & \textbf{QasA} & \textbf{QasT} & \textbf{QMS} & \textbf{SO} & \textbf{SSFD} & \textbf{Avg} \\
\midrule
BGE-Large                  & 335M  & 512                   & 69.4 & 10.0 & 10.1 & 92.6 & 18.9 & 89.3 & 20.1 & 94.5 & 87.1 & 46.0 & 74.1 & 65.8 & 56.5 \\
OpenAI Ada-002             & --    & 8K                    & 78.9 & 16.3 & 16.8 & 97.1 & 28.2 & 90.1 & 31.2 & 98.9 & 93.8 & 48.8 & 72.3 & 86.2 & 63.2 \\
Jina-Embeddings-v2         & 137M  & 8K                    & 81.6 & 27.0 & 26.1 & 98.6 & 30.7 & 86.4 & 60.7 & 99.4 & 95.1 & 40.5 & 69.0 & 93.3 & 67.2 \\
E5-Mistral                 & 7.11B & 4K                    & \textbf{88.3} & \textbf{33.9} & \textbf{34.6} & 98.3 & \textbf{49.5} & \textbf{93.5} & 35.3 & \textbf{99.8} & \textbf{98.4} & 46.8 & \textbf{82.7} & \textbf{95.9} & \textbf{71.4} \\
\addlinespace[1pt]
GTE-small (chunked)        & 33M   & 512                   & 78.67 & 28.99 & 26.68 & 98.67          & 33.83 & 88.41 & 49.88          & 99.67 & 93.76 & 64.48 & 61.70 & 93.61 & 68.20 \\
GTE-base  (chunked)        & 110M  & 512                   & 82.32 & 27.91 & 26.53 & \textbf{98.76} & 31.81 & 91.14 & 50.92          & 99.32 & 94.29 & 63.36 & 62.06 & 89.07 & 68.12 \\
GTE-large (chunked)        & 335M  & 512                   & 85.78 & 31.16 & 29.28 & 98.45          & 39.75 & 92.06 & 62.76          & 99.70 & 96.73 & 63.12 & 63.56 & 94.25 & 71.38 \\
REIGN + GTE-small @s384    & 55M   & $\infty^{\dagger}$    & 79.65 & 27.23 & 26.41 & 98.55          & 34.26 & 88.19 & 63.18          & 98.49 & 90.19 & 63.08 & 63.46 & 93.75 & 68.87 \\
REIGN + GTE-small @s512    & 55M   & $\infty^{\dagger}$    & 86.15 & 26.15 & 24.89 & 98.67          & 32.69 & 89.45 & 60.53          & 98.69 & 90.11 & 63.37 & 62.69 & 93.68 & 68.92 \\
REIGN + GTE-base  @s384    & 132M  & $\infty^{\dagger}$    & 81.68 & 24.73 & 24.41 & 98.41          & 29.12 & 87.56 & 60.19          & 98.72 & 90.72 & 60.48 & 65.45 & 91.46 & 67.74 \\
REIGN + GTE-base  @s512    & 132M  & $\infty^{\dagger}$    & 80.96 & 24.86 & 23.67 & 98.53          & 30.07 & 88.93 & 56.39          & 98.97 & 91.22 & 60.98 & 65.25 & 92.79 & 67.72 \\
REIGN + GTE-large @s384    & 357M  & $\infty^{\dagger}$    & 83.90 & 27.69 & 26.70 & 98.62          & 40.11 & 88.36 & \textbf{64.76} & 99.47 & 94.21 & \textbf{64.59} & 67.37 & 93.49 & 70.77 \\
REIGN + GTE-large @s512    & 357M  & $\infty^{\dagger}$    & 84.94 & 27.56 & 26.14 & 98.74          & 39.98 & 89.96 & 61.18          & 99.56 & 94.93 & 63.00 & 66.81 & 93.09 & 70.49 \\
\midrule
\multicolumn{16}{l}{\emph{Trained on LoCo splits; not a zero-shot comparison.}}\\
M2-BERT-8K                 & 80M   & 8K                    & 87.7 & 94.9 & 95.4 & 96.5 & 58.6 & 97.1 & 82.0 & 98.9 & 93.1 & 69.6 & 84.8 & 84.6 & 86.9 \\
M2-BERT-32K                & 80M   & 32K                   & 92.1 & 96.8 & 97.0 & 98.7 & 67.1 & 98.5 & 89.3 & 98.3 & 97.9 & 70.4 & 88.9 & 98.0 & 91.1 \\
\bottomrule
\end{tabular}
\end{table*}

\paragraph{Headline numbers.}
The bare-GN chunked baseline \textsc{GTE-small (chunked)} already
reaches 68.20~macro-average nDCG@10, outperforming BGE-Large (56.5),
OpenAI Ada-002 (63.2), and Jina-Embeddings-v2 (67.2). Adding the
REIGN encoder on top (55M total) lifts the macro-average to 68.92
($+0.72$ over its GN). With a larger GN,
\textsc{REIGN+GTE-large} (357M total) reaches 70.77, within 0.65 of
the $20\times$-larger E5-Mistral \citep{wang-etal-2024-improving-text}
(71.4), and achieves the best zero-shot
scores on \texttt{passage\_retrieval} (64.76) and \texttt{qmsum}
(64.59).

\paragraph{Where REIGN helps over its GN.}
REIGN's added value concentrates at the small-GN end. Relative to the
same-GN chunked baseline it is $+0.72$~macro on \textsc{GTE-small},
$-0.38$ on \textsc{GTE-base}, and
$-0.61$ on \textsc{GTE-large};
bare \textsc{GTE-large (chunked)} is the highest LoCo
macro-average we observe across any configuration tested
(71.38, $0.02$ below E5-Mistral at $20\times$ smaller scale).
The per-subtask pattern explains the average. REIGN gives large lifts
where evidence must be aggregated across chunks
(\texttt{passage\_retrieval} $+13.30$ at \textsc{GTE-small},
\texttt{2wikimqa} $+7.48$), but a stronger GN's chunked mean-pool
already saturates the query$\rightarrow$passage subtasks that dominate
LoCo (at the bare GN, \texttt{gov\_report} and \texttt{qasper\_abstract}
are ${\geq}98$; \texttt{qasper\_title} and \texttt{summ\_screen\_fd}
are ${\geq}89$). REIGN and its GN alike are weak on the lexically
driven legal subtasks (\texttt{courtlistener\_*},
\texttt{legal\_case\_reports}), a term-overlap regime the
embedding-level paradigm cannot exploit (see Limitations).

\paragraph{Non-monotone GN scaling.}
The non-monotone GN-scale pattern (REIGN+GTE-base worse than both small
and large) also appears in-distribution (\Cref{tab:ablation_indist}),
so it is not LoCo-specific; we tentatively attribute it to a mismatch
between \textsc{GTE-base}'s chunk-embedding geometry and the fixed 22M
REIGN-encoder capacity, and leave a co-varied sweep to future work.

\subsection{Real-World Validation: Patent Retrieval on DAPFAM}
\label{ssec:dapfam}
To test whether REIGN's gains on synthetic long-form Wikipedia carry
over to a real specialised domain, we evaluate on \textsc{DAPFAM}
\citep{ayaou2025dapfam}, a domain-adaptive patent retrieval benchmark
of USPTO patent families. Documents are substantially longer than
those in typical retrieval corpora (median $\sim$7--12K whitespace tokens; tail
to several hundred thousand), squarely in REIGN's target regime.
We use the \textsc{FullText} view throughout.

\paragraph{Setup.}
\textsc{DAPFAM} ships eval-only, so we build a query-disjoint
70/15/15 stratified train/val/test partition (seed 42), holding out
$n{=}187$ test queries. We retain the authors' IN-/OUT-IPC distinction
\emph{within} the held-out set, where \textsc{test\_in} ($n{=}180$) and
\textsc{test\_out} ($n{=}138$) are positives-only subsets of \textsc{test},
so the cross-domain breakdown is reported on identical queries across
systems. Retrieval is against the full 45{,}336-document FullText
corpus with self-matches removed; the authoritative metric is
nDCG@100. Fine-tuning uses the dataset's random \texttt{score}{=}0
negatives, with the objective's partial branch disabled (further
details in \Cref{apx:dapfam}).

\begin{table}[t]
\centering
\caption{\textbf{Patent retrieval on \textsc{DAPFAM}.} nDCG@100 on
\textsc{test} ($n{=}187$), IN-IPC subset \textsc{test\_in} ($n{=}180$),
and cross-IPC subset \textsc{test\_out} ($n{=}138$); 45{,}336-doc
FullText corpus. $\Delta$ is the lift on \textsc{test} over the
same-GN bare chunked baseline (in/out columns not evaluated for sparse
and full-document baselines). REIGN rows use the
\textsc{GoodWiki-Long}-trained zero-shot \textsc{base-l3} checkpoint at
the best-performing stride per GN.\\
\textbf{Bold} = best mean per
column; on \textsc{test} the margins between REIGN and the strongest
dense baselines are not statistically significant
(\Cref{tab:dapfam_significance}).\\
$^{\ddagger}$Nomic-Embed-v1.5's native training context is 2K tokens,
and \citet{nussbaum2024nomic} mark its 8K RoPE extension as
experimental; it collapses on documents whose median length already
exceeds the native window.}
\label{tab:dapfam_headline}
\scriptsize
\setlength{\tabcolsep}{3pt}
\begin{tabular}{lrrrrr}
\toprule
\textbf{System} & \textbf{Params} & \textbf{test} & \textbf{test\_in} & \textbf{test\_out} & \textbf{$\Delta$} \\
\midrule
\multicolumn{6}{l}{\emph{Sparse lexical}}\\
BM25                                & --           & 25.62          & --             & --             & --              \\
TF-IDF                              & --           & 30.06          & --             & --             & --              \\
\midrule
\multicolumn{6}{l}{\emph{Native long-context dense (truncate to 8K)}}\\
BGE-M3                              & 568M         & 23.48          & --             & --             & --              \\
Jina-Embeddings-v3                  & 572M         & 32.97          & --             & --             & --              \\
Nomic-Embed-v1.5\,$^{\ddagger}$     & 137M         & 16.49          & --             & --             & --              \\
Stella-en-1.5B-v5                   & 1.54B        & 32.91          & --             & --             & --              \\
\midrule
\multicolumn{6}{l}{\emph{Bare GN (chunked mean-pool, 512)}}\\
GTE-small                           & 33M          & 29.95          & 34.92          & 4.63           & --              \\
GTE-base                            & 110M         & 31.53          & 36.68          & 5.14           & --              \\
GTE-large                           & 335M         & 32.43          & 37.50          & 5.22           & --              \\
BGE-large                           & 335M         & 31.19          & --             & --             & --              \\
\midrule
\multicolumn{6}{l}{\emph{REIGN (this work)}}\\
REIGN + GTE-small                   & 55M          & 31.69          & 36.97          & 4.95           & $\mathbf{+1.74}$ \\
REIGN + GTE-base                    & 132M         & 32.72          & 37.65          & \textbf{5.55}  & $+1.19$         \\
REIGN + GTE-large                   & 357M         & \textbf{33.10} & \textbf{38.15} & 5.02           & $+0.67$         \\
\bottomrule
\end{tabular}
\end{table}

\paragraph{Headline result.}
\Cref{tab:dapfam_headline} compares REIGN against sparse (BM25, TF-IDF),
native long-context dense (BGE-M3, Jina-Embeddings-v3, Nomic-Embed-v1.5,
Stella-en-1.5B-v5), and chunked dense (BGE-large, GTE-small/base/large)
baselines. REIGN trained \emph{only} on \textsc{GoodWiki-Long}, with no
patent-specific adaptation, has the best mean, 33.10~nDCG@100
(REIGN+\textsc{GTE-large}, 357M total; \Cref{fig:pareto_overview}b),
and is statistically
indistinguishable from the strongest single-stage dense baselines
Jina-v3 (572M) and Stella-1.5B (1.54B) with $1.6$--$4.3\times$ fewer
parameters (\Cref{tab:dapfam_significance}); the lift over
\textsc{GTE-large} chunked alone is $+0.67$. nDCG@100 grows
monotonically with GN size ($31.69 \to 32.72 \to 33.10$, best stride
per GN).

\begin{table}[t]
\centering
\caption{\textbf{Paired significance on \textsc{DAPFAM} test.}
REIGN+\textsc{GTE-large} @s512 (33.10) vs.\ each baseline
on per-query nDCG@100 ($n{=}187$). Paired bootstrap 95\% CI
($B{=}10{,}000$) and paired randomisation $p$ ($B{=}10{,}000$),
Holm-adjusted across the five comparisons ($p_{\text{H}}$). Deltas are
computed on unrounded per-query means from the significance re-runs,
whose provenance is given in
\Cref{apx:dapfam}.\\
W/L/T = per-query wins/losses/ties.}
\label{tab:dapfam_significance}
\scriptsize
\setlength{\tabcolsep}{2.1pt}
\begin{tabular}{lrcrrc}
\toprule
\textbf{Baseline} & \textbf{$\Delta$} & \textbf{95\% CI} & \textbf{$p$} & \textbf{$p_{\text{H}}$} & \textbf{W/L/T} \\
\midrule
Jina-v3           & $+0.12$ & $[-1.13,+1.36]$ & .857    & 1.000   & 81/95/11  \\
Stella-1.5B       & $+0.09$ & $[-1.71,+1.98]$ & .921    & 1.000   & 80/95/12  \\
GTE-large chunked & $+0.67$ & $[+0.01,+1.34]$ & .051    & .154    & 103/73/11 \\
BM25              & $+7.48$ & $[+6.03,+9.03]$ & $<$.001 & $<$.001 & 136/43/8  \\
TF-IDF            & $+3.03$ & $[+1.50,+4.60]$ & $<$.001 & $<$.001 & 103/74/10 \\
\bottomrule
\end{tabular}
\end{table}

\paragraph{Statistical significance.}
We test the \textsc{test}-split margins with a paired bootstrap
($B{=}10{,}000$, seed 42) for confidence intervals and a paired
randomisation test ($B{=}10{,}000$) for $p$-values, Holm-adjusted
across the five comparisons (\Cref{tab:dapfam_significance}). All
three dense margins are non-significant. REIGN is at statistical
parity with Jina-v3 and Stella-1.5B while using $1.6$--$4.3\times$
fewer parameters, and against both it wins on fewer queries than it loses
(81/95 and 80/95); its higher mean comes from larger gains on the
queries it wins. Against its own chunked GN the margin of $+0.67$
is a boundary case, since the bootstrap CI marginally excludes zero
($[+0.01, +1.34]$) while the randomisation test does not reach
significance ($p{=}.051$; Holm-adjusted $.154$). REIGN is also the
\emph{larger} system in that comparison (357M vs.\ 335M), so the
parameter-efficiency framing does not apply there. REIGN beats both
sparse baselines significantly; sparse retrieval leads on the
synthetic \textsc{GoodWiki-Long} (\Cref{ssec:exp_gw}) but not on
these human-written patents.

\paragraph{Cross-domain decomposition.}
The \textsc{test\_in}/\textsc{test\_out} columns split the held-out
set into IN-IPC and cross-IPC positives. Absolute nDCG is dominated by
IN-IPC ($\sim$38\%); the cross-IPC partition plateaus in the
$4.6$--$5.6$ range across the full GTE scale axis (chunked
small/base/large) and every REIGN variant. This plateau is an
intrinsic property of the OUT-split (random non-cited families
provide too weak a training contrast against a topically dense 45K
corpus, a limitation noted by the dataset authors), not a model-side
deficit. On IN-IPC, REIGN's lift over the matched bare-GN chunked
baseline narrows with GN scale ($+2.05$ at \textsc{GTE-small}, $+0.97$
at \textsc{GTE-base}, $+0.65$ at \textsc{GTE-large}), mirroring the
in-distribution \textsc{GoodWiki-Long} pattern.

\paragraph{Fine-tuning does not exceed zero-shot.}
To rule out under-tuning, we swept fine-tuning configurations over
warm- vs.\ cold-start, backbone scale, stride, and a learning-rate
$\times$ weight-decay grid (\Cref{apx:dapfam}). The best run, a
regularised warm-start at lr~$10^{-6}$ and wd~$10^{-2}$, reaches
32.73~nDCG@100, statistical parity with the zero-shot checkpoint it
started from ($32.68 \to 32.73$). Naive fine-tuning at lr~$10^{-5}$
degrades by 0.4--1.5 points via catastrophic forgetting (warm) or
overfit (cold); cold-start loses an additional 0.43 to warm-start at
matched hyperparameters (32.30 vs.\ 32.73), confirming that the
\textsc{GoodWiki-Long}-trained encoder carries useful inductive bias
outside its training domain.

\section{Conclusion}
\label{sec:conclusion}

We presented \textbf{REIGN}, a framework for scalable document-level
representation learning that operates directly on precomputed chunk
embeddings from a frozen Guidance Network and trains a lightweight
cross-chunk encoder under a contrastive objective. The design (i)
scales to inputs of $10^5{+}$ tokens without architectural
modifications,
(ii) reduces per-document training and inference cost by orders of
magnitude through cached GN embeddings (\Cref{ssec:efficiency}), and
(iii) transfers zero-shot from synthetic long-form Wikipedia to a
specialised long-document domain (IN-IPC patent retrieval). Training
runs on \textsc{GoodWiki-Long-Synthetic}, a synthetic long-document
retrieval benchmark we release, and \Cref{apx:theory} separates the
information the frozen GN discards from what training can recover.

The smallest REIGN configuration (55M, GTE-small) outperforms the
LoCo-paper zero-shot baselines (BGE-Large, Jina-v2, Ada-002), the
first two at $2.5$--$6\times$ smaller scale, and REIGN+GTE-large reaches
70.77~macro nDCG@10 within 0.65~points of E5-Mistral with $20\times$
fewer parameters. On \textsc{DAPFAM} REIGN has the best mean overall
(33.10~nDCG@100) and on the IN-IPC partition, and is statistically
indistinguishable from the strongest single-stage dense baselines
Jina-v3 (572M) and Stella-1.5B (1.54B) with $1.6$--$4.3\times$ fewer
total parameters (\Cref{tab:dapfam_significance}); the cross-IPC
plateau ($\sim$5~nDCG@100
across every system) is a dataset property.

The LoCo picture is
task-conditional. REIGN delivers large lifts on multi-chunk-aggregation
subtasks, but the bare GTE-large chunked mean-pool is the Pareto-best
LoCo configuration overall. The in-distribution / OOD encoder-capacity
ablation (\Cref{apx:ablation_indist}, \Cref{apx:ablation_ood_full})
establishes \textsc{base-l3} as the principled default and surfaces a
methodological caveat. In-distribution sweet-spot claims must be
verified out-of-distribution.

Natural future
directions include adaptive or overlapping chunking, joint GN
fine-tuning, and hybrid composition with a sparse lexical channel.

\section*{Limitations}

\paragraph{Task-conditional value over the GN.}
REIGN's added value over its frozen Guidance Network scales with
how much cross-chunk reasoning the retrieval task actually requires.
On doc-to-doc retrieval the lift is consistent. REIGN+GTE-small (55M)
tops every dense baseline on \textsc{GoodWiki-Long}
(\Cref{ssec:exp_gw}), and REIGN+GTE-large (357M) has the best
\textsc{DAPFAM} mean against $1.6$--$4.3\times$-larger dense
baselines (\Cref{ssec:dapfam}); the margins are small
(0.10~nDCG@10 over BGE-M3 in-distribution) and on \textsc{DAPFAM} not
statistically significant (\Cref{tab:dapfam_significance}), so the
advantage is parameter efficiency rather than raw accuracy. On
query$\rightarrow$passage long-context retrieval (LoCo,
\Cref{ssec:loco}) the chunked mean-pool of a stronger GN already
saturates the per-subtask ceiling, and REIGN's encoder adds little
on \textsc{GTE-base} ($-0.38$ macro) and slightly hurts on
\textsc{GTE-large} ($-0.61$ macro). On short-passage retrieval
where the chunked architecture cannot be exercised at all
(MTEB \textsc{ArguAna}, \textsc{FiQA-2018}; \Cref{apx:mteb}),
performance drops by 5--7~nDCG@10. Across the three regimes,
REIGN's encoder is principally a doc-to-doc
cross-chunk-reasoning module, and deployment should target tasks whose
retrieval signal requires aggregating evidence across multiple
chunks of a document.

\paragraph{Short-context retrieval.}
REIGN is designed for long-document retrieval and underperforms
when inputs are shorter than the chunk size. On the two MTEB
short-text benchmarks we evaluated (\textsc{ArguAna},
\textsc{FiQA-2018}; \Cref{apx:mteb}), REIGN trails the truncated
GTE-small baseline by 5--7~nDCG@10 points. With a single chunk
embedding there is nothing to aggregate across, so the
cross-chunk signal REIGN is built to exploit disappears.

\paragraph{Lexically driven long-form text.}
A concrete in-distribution case appears on the legal subtasks of
LoCo (\Cref{ssec:loco}), where \texttt{courtlistener\_HTML} (26.15
nDCG@10), \texttt{courtlistener\_Plain} (24.89), and
\texttt{legal\_case\_reports} (32.69) trail REIGN's QA and
summarisation subtasks by as much as 74 points. These corpora are
dominated by highly
repetitive legal phrasing, so the retrieval signal is concentrated
in term-overlap rather than document-level semantics; a cross-chunk
encoder over pooled GN embeddings discards exactly the
surface-level lexical contrast that such corpora rely on. We view
this as a deliberate trade-off of the embedding-level paradigm;
hybrid configurations composing REIGN with a sparse lexical channel
are a natural direction for future work.

\paragraph{Coupling to the Guidance Network.}
REIGN's representation quality is upper-bounded by its frozen GN, so
any biases, semantic artefacts, or compression errors in the GN
embeddings are inherited unchanged. Switching to a new GN requires
full retraining of the REIGN encoder; the cached chunk-embedding
artefact is GN-specific. The encoder-capacity ablation
(\Cref{apx:ablation_indist}, \Cref{apx:ablation_ood_full}) further
shows that an in-distribution
``sweet-spot'' configuration can be misleading, because the
OOD-justified default differs from the in-distribution optimum.
Practitioners adapting REIGN to a new domain should plan an OOD
verification step.

\paragraph{Chunk-size $\times$ memory trade-off.}
Smaller chunks yield finer-grained signals but raise GPU memory
usage with chunk count. As a concrete reference, a
\textsc{REIGN-small-l2} model paired with GTE-large at effective
batch size 312 consumes approximately 1.8, 3.6, 4.5, and 8.4~GB
for chunk sizes 512, 256, 128, and 64 respectively.

\paragraph{Token-level fidelity.}
Because REIGN consumes abstracted chunk embeddings, it is not
suited to tasks that demand token-level fidelity (syntactic
parsing, numerical reasoning, fine-grained entity matching),
where the lossy compression at the GN boundary discards information
the downstream task cannot reconstruct.

\section*{Ethics Statement}

\textsc{GoodWiki-Long-Synthetic} derives from GoodWiki \citep{GoodWiki},
a cleaned release of English Wikipedia distributed under CC BY-SA; our
dataset release preserves that licensing and attribution. The rephrased
corpus documents are machine-generated with GPT-4o-mini
\citep{openai2024gpt4ocard} and are marked as synthetic in the dataset
card. \textsc{DAPFAM} \citep{ayaou2025dapfam} is built from public USPTO
patent records. No human subjects were involved, and neither corpus
contains private or personally identifying information beyond what is
already public in Wikipedia articles and patent filings.

\section*{Acknowledgements}

This work was supported by OBSS under project code ARGEM-024.


\bibliography{custom}

\begin{thebibliography}{43}
\providecommand{\natexlab}[1]{#1}

\bibitem[{Ayaou et~al.(2025)Ayaou, Cavallucci, and Chibane}]{ayaou2025dapfam}
Iliass Ayaou, Denis Cavallucci, and Hicham Chibane. 2025.
\newblock \href {https://arxiv.org/abs/2506.22141} {{DAPFAM}: A domain-aware
  family-level dataset to benchmark cross-domain patent retrieval}.
\newblock \emph{Preprint}, arXiv:2506.22141.

\bibitem[{Beltagy et~al.(2020)Beltagy, Peters, and
  Cohan}]{beltagy2020longformer}
Iz~Beltagy, Matthew~E Peters, and Arman Cohan. 2020.
\newblock \href {https://arxiv.org/abs/2004.05150} {Longformer: The
  long-document transformer}.
\newblock \emph{Preprint}, arXiv:2004.05150.

\bibitem[{Chen et~al.(2024)Chen, Xiao, Zhang, Luo, Lian, and
  Liu}]{chen-etal-2024-m3}
Jianlyu Chen, Shitao Xiao, Peitian Zhang, Kun Luo, Defu Lian, and Zheng Liu.
  2024.
\newblock \href {https://doi.org/10.18653/v1/2024.findings-acl.137}
  {{M}3-embedding: Multi-linguality, multi-functionality, multi-granularity
  text embeddings through self-knowledge distillation}.
\newblock In \emph{Findings of the Association for Computational Linguistics:
  ACL 2024}, pages 2318--2335, Bangkok, Thailand. Association for Computational
  Linguistics.

\bibitem[{Chen et~al.(2020)Chen, Kornblith, Norouzi, and
  Hinton}]{chen2020simple}
Ting Chen, Simon Kornblith, Mohammad Norouzi, and Geoffrey Hinton. 2020.
\newblock A simple framework for contrastive learning of visual
  representations.
\newblock In \emph{Proceedings of the 37th International Conference on Machine
  Learning}, ICML'20, pages 1597--1607. JMLR.org.

\bibitem[{Choi et~al.(2024)Choi, Kim, Lee, Kwon, Gu, Kim, Cho, and yong
  Sohn}]{choi2024linqembedmistraltechnicalreport}
Chanyeol Choi, Junseong Kim, Seolhwa Lee, Jihoon Kwon, Sangmo Gu, Yejin Kim,
  Minkyung Cho, and Jy~yong Sohn. 2024.
\newblock \href {https://arxiv.org/abs/2412.03223} {Linq-embed-mistral
  technical report}.
\newblock \emph{Preprint}, arXiv:2412.03223.

\bibitem[{Choi(2023)}]{GoodWiki}
Euirim Choi. 2023.
\newblock Goodwiki dataset.
\newblock \url{https://www.github.com/euirim/goodwiki}.

\bibitem[{Cieri et~al.(2004)Cieri, Miller, and Walker}]{cieri2004fisher}
Christopher Cieri, David Miller, and Kevin Walker. 2004.
\newblock \href {https://aclanthology.org/L04-1500/} {The {F}isher corpus: A
  resource for the next generations of speech-to-text}.
\newblock In \emph{Proceedings of the Fourth International Conference on
  Language Resources and Evaluation ({LREC}'04)}, Lisbon, Portugal. European
  Language Resources Association (ELRA).

\bibitem[{{Common Crawl Foundation}(2024)}]{commoncrawl_cc_main_2024_10}
{Common Crawl Foundation}. 2024.
\newblock Common crawl corpus: Cc-main-2024-10.
\newblock
  \url{https://data.commoncrawl.org/crawl-data/CC-MAIN-2024-10/index.html}.
\newblock Accessed: 2025-05-02.

\bibitem[{Dettmers et~al.(2023)Dettmers, Pagnoni, Holtzman, and
  Zettlemoyer}]{dettmers2023qlora}
Tim Dettmers, Artidoro Pagnoni, Ari Holtzman, and Luke Zettlemoyer. 2023.
\newblock {QL}o{RA}: Efficient finetuning of quantized {LLM}s.
\newblock In \emph{Proceedings of the 37th International Conference on Neural
  Information Processing Systems}, NIPS '23, pages 10088--10115, Red Hook, NY,
  USA. Curran Associates Inc.

\bibitem[{Ding et~al.(2020)Ding, Zhou, Yang, and Tang}]{ding2020cogltx}
Ming Ding, Chang Zhou, Hongxia Yang, and Jie Tang. 2020.
\newblock \href
  {https://proceedings.neurips.cc/paper_files/paper/2020/file/96671501524948bc3937b4b30d0e57b9-Paper.pdf}
  {{C}og{LTX}: Applying {BERT} to long texts}.
\newblock In \emph{Advances in Neural Information Processing Systems},
  volume~33, pages 12792--12804. Curran Associates, Inc.

\bibitem[{Faysse(2023)}]{faysse_project_gutenberg_2023}
Manuel Faysse. 2023.
\newblock Project gutenberg dataset.
\newblock \url{https://huggingface.co/datasets/manu/project_gutenberg}.
\newblock Accessed: 2025-05-02.

\bibitem[{Foundation(2025)}]{wikidump}
Wikimedia Foundation. 2025.
\newblock Wikimedia downloads.
\newblock \url{https://dumps.wikimedia.org}.

\bibitem[{Fu et~al.(2023)Fu, Arora, Grogan, Johnson, Eyuboglu, Thomas, Spector,
  Poli, Rudra, and R{\'e}}]{fu2023monarch}
Daniel~Y. Fu, Simran Arora, Jessica Grogan, Isys Johnson, Sabri Eyuboglu,
  Armin~W. Thomas, Benjamin Spector, Michael Poli, Atri Rudra, and Christopher
  R{\'e}. 2023.
\newblock {M}onarch {M}ixer: A simple sub-quadratic {GEMM}-based architecture.
\newblock In \emph{Proceedings of the 37th International Conference on Neural
  Information Processing Systems}, NIPS '23, pages 77546--77603, Red Hook, NY,
  USA. Curran Associates Inc.

\bibitem[{Houlsby et~al.(2019)Houlsby, Giurgiu, Jastrzebski, Morrone,
  De~Laroussilhe, Gesmundo, Attariyan, and Gelly}]{houlsby2019parameter}
Neil Houlsby, Andrei Giurgiu, Stanislaw Jastrzebski, Bruna Morrone, Quentin
  De~Laroussilhe, Andrea Gesmundo, Mona Attariyan, and Sylvain Gelly. 2019.
\newblock \href {https://proceedings.mlr.press/v97/houlsby19a.html}
  {Parameter-efficient transfer learning for {NLP}}.
\newblock In \emph{Proceedings of the 36th International Conference on Machine
  Learning}, volume~97 of \emph{Proceedings of Machine Learning Research},
  pages 2790--2799. PMLR.

\bibitem[{Hu et~al.(2022)Hu, Shen, Wallis, Allen-Zhu, Li, Wang, Wang, and
  Chen}]{hu2022lora}
Edward~J Hu, Yelong Shen, Phillip Wallis, Zeyuan Allen-Zhu, Yuanzhi Li, Shean
  Wang, Lu~Wang, and Weizhu Chen. 2022.
\newblock \href {https://openreview.net/forum?id=nZeVKeeFYf9} {Lo{RA}: Low-rank
  adaptation of large language models}.
\newblock In \emph{International Conference on Learning Representations}.

\bibitem[{Kaplan et~al.(2020)Kaplan, McCandlish, Henighan, Brown, Chess, Child,
  Gray, Radford, Wu, and Amodei}]{kaplan2020scaling}
Jared Kaplan, Sam McCandlish, Tom Henighan, Tom~B. Brown, Benjamin Chess, Rewon
  Child, Scott Gray, Alec Radford, Jeffrey Wu, and Dario Amodei. 2020.
\newblock \href {https://arxiv.org/abs/2001.08361} {Scaling laws for neural
  language models}.
\newblock \emph{Preprint}, arXiv:2001.08361.

\bibitem[{Lee et~al.(2025)Lee, Chen, Dua, Cer, Shanbhogue, Naim, {\'A}brego,
  Li, Chen, Vera, Ren, Zhang, Salz, Boratko, Han, Chen, Huang, Rao, Suganthan,
  Han, Doumanoglou, Gupta, Moiseev, Yip, Jain, Baumgartner, Shahi, {Palma
  Gomez}, Mariserla, Choi, Shah, Goenka, Chen, Xia, Chen, Duddu, Chen, Walker,
  Zhou, Ghiya, Gleicher, Gill, Dong, Seyedhosseini, Sung, Hoffmann, and
  Duerig}]{lee2025gemini}
Jinhyuk Lee, Feiyang Chen, Sahil Dua, Daniel Cer, Madhuri Shanbhogue, Iftekhar
  Naim, Gustavo~Hern{\'a}ndez {\'A}brego, Zhe Li, Kaifeng Chen,
  Henrique~Schechter Vera, Xiaoqi Ren, Shanfeng Zhang, Daniel Salz, Michael
  Boratko, Jay Han, Blair Chen, Shuo Huang, Vikram Rao, Paul Suganthan, and 28
  others. 2025.
\newblock \href {https://arxiv.org/abs/2503.07891} {Gemini embedding:
  Generalizable embeddings from {G}emini}.
\newblock \emph{Preprint}, arXiv:2503.07891.

\bibitem[{Li et~al.(2023{\natexlab{a}})Li, Yates, MacAvaney, He, and
  Sun}]{parade2023li}
Canjia Li, Andrew Yates, Sean MacAvaney, Ben He, and Yingfei Sun.
  2023{\natexlab{a}}.
\newblock \href {https://doi.org/10.1145/3600088} {{PARADE}: Passage
  representation aggregation for document reranking}.
\newblock \emph{ACM Trans. Inf. Syst.}, 42(2).

\bibitem[{Li et~al.(2023{\natexlab{b}})Li, Zhang, Zhang, Long, Xie, and
  Zhang}]{li2023towards}
Zehan Li, Xin Zhang, Yanzhao Zhang, Dingkun Long, Pengjun Xie, and Meishan
  Zhang. 2023{\natexlab{b}}.
\newblock \href {https://arxiv.org/abs/2308.03281} {Towards general text
  embeddings with multi-stage contrastive learning}.
\newblock \emph{Preprint}, arXiv:2308.03281.

\bibitem[{Liu et~al.(2024)Liu, Wang, Yin, Molchanov, Wang, Cheng, and
  Chen}]{liu2024dora}
Shih-Yang Liu, Chien-Yi Wang, Hongxu Yin, Pavlo Molchanov, Yu-Chiang~Frank
  Wang, Kwang-Ting Cheng, and Min-Hung Chen. 2024.
\newblock {DoRA}: Weight-decomposed low-rank adaptation.
\newblock In \emph{Proceedings of the 41st International Conference on Machine
  Learning}, ICML'24, pages 32100--32121. JMLR.org.

\bibitem[{Meng et~al.(2024)Meng, Liu, Joty, Xiong, Zhou, and
  Yavuz}]{SFRAIResearch2024}
Rui Meng, Ye~Liu, Shafiq~Rayhan Joty, Caiming Xiong, Yingbo Zhou, and Semih
  Yavuz. 2024.
\newblock \href {https://www.salesforce.com/blog/sfr-embedding/}
  {{SFR}-{E}mbedding-{M}istral: Enhance text retrieval with transfer learning}.
\newblock Salesforce AI Research Blog.

\bibitem[{Muennighoff et~al.(2023)Muennighoff, Tazi, Magne, and
  Reimers}]{muennighoff2022mteb}
Niklas Muennighoff, Nouamane Tazi, Lo{\"\i}c Magne, and Nils Reimers. 2023.
\newblock \href {https://doi.org/10.18653/v1/2023.eacl-main.148} {{MTEB}:
  Massive text embedding benchmark}.
\newblock In \emph{Proceedings of the 17th Conference of the European Chapter
  of the Association for Computational Linguistics}, pages 2014--2037,
  Dubrovnik, Croatia. Association for Computational Linguistics.

\bibitem[{Nussbaum et~al.(2025)Nussbaum, Morris, Duderstadt, and
  Mulyar}]{nussbaum2024nomic}
Zach Nussbaum, John~X. Morris, Brandon Duderstadt, and Andriy Mulyar. 2025.
\newblock \href {https://openreview.net/forum?id=IPmzyQSiQE} {{N}omic embed:
  Training a reproducible long context text embedder}.
\newblock \emph{Transactions on Machine Learning Research}.

\bibitem[{Oord et~al.(2018)Oord, Li, and Vinyals}]{oord2018representation}
Aaron van~den Oord, Yazhe Li, and Oriol Vinyals. 2018.
\newblock \href {https://arxiv.org/abs/1807.03748} {Representation learning
  with contrastive predictive coding}.
\newblock \emph{Preprint}, arXiv:1807.03748.

\bibitem[{OpenAI(2023)}]{tiktoken2023}
OpenAI. 2023.
\newblock tiktoken: A fast {BPE} tokenizer for use with {O}pen{AI}'s models.
\newblock \url{https://github.com/openai/tiktoken}.
\newblock Version 0.9.0.

\bibitem[{{OpenAI}(2024)}]{openai2024gpt4ocard}
{OpenAI}. 2024.
\newblock \href {https://arxiv.org/abs/2410.21276} {{GPT}-4o system card}.
\newblock \emph{Preprint}, arXiv:2410.21276.

\bibitem[{OpenAI()}]{tokencount2024openai}
OpenAI. 2024.
\newblock \href
  {https://help.openai.com/en/articles/4936856-what-are-tokens-and-how-to-count-them}
  {Understanding and counting tokens}.
\newblock OpenAI Help Center; accessed 2026-08-28.

\bibitem[{Pappagari et~al.(2019)Pappagari, Zelasko, Villalba, Carmiel, and
  Dehak}]{pappagari2019hierarchical}
Raghavendra Pappagari, Piotr Zelasko, Jes{\'u}s Villalba, Yishay Carmiel, and
  Najim Dehak. 2019.
\newblock Hierarchical transformers for long document classification.
\newblock In \emph{2019 IEEE Automatic Speech Recognition and Understanding
  Workshop (ASRU)}, pages 838--844. IEEE.

\bibitem[{Saad-Falcon et~al.(2024)Saad-Falcon, Fu, Arora, Guha, and
  R{\'e}}]{saad2024benchmarking}
Jon Saad-Falcon, Daniel~Y Fu, Simran Arora, Neel Guha, and Christopher R{\'e}.
  2024.
\newblock \href {https://proceedings.mlr.press/v235/saad-falcon24a.html}
  {Benchmarking and building long-context retrieval models with {L}o{C}o and
  {M2-BERT}}.
\newblock In \emph{Proceedings of the 41st International Conference on Machine
  Learning}, volume 235 of \emph{Proceedings of Machine Learning Research},
  pages 42918--42946. PMLR.

\bibitem[{Sennrich et~al.(2016)Sennrich, Haddow, and
  Birch}]{sennrich2015neural}
Rico Sennrich, Barry Haddow, and Alexandra Birch. 2016.
\newblock \href {https://doi.org/10.18653/v1/P16-1162} {Neural machine
  translation of rare words with subword units}.
\newblock In \emph{Proceedings of the 54th Annual Meeting of the Association
  for Computational Linguistics (Volume 1: Long Papers)}, pages 1715--1725,
  Berlin, Germany. Association for Computational Linguistics.

\bibitem[{Sturua et~al.(2025)Sturua, Mohr, Kalim~Akram, G{\"u}nther, Wang,
  Krimmel, Wang, Mastrapas, Koukounas, Wang, and Xiao}]{sturua2024jina}
Saba Sturua, Isabelle Mohr, Mohammad Kalim~Akram, Michael G{\"u}nther, Bo~Wang,
  Markus Krimmel, Feng Wang, Georgios Mastrapas, Andreas Koukounas, Nan Wang,
  and Han Xiao. 2025.
\newblock \href {https://doi.org/10.1007/978-3-031-88720-8_21} {{J}ina
  {E}mbeddings {V3}: Multilingual text encoder with low-rank adaptations}.
\newblock In \emph{Advances in Information Retrieval: 47th European Conference
  on Information Retrieval, ECIR 2025, Lucca, Italy, April 6--10, 2025,
  Proceedings, Part V}, pages 123--129, Berlin, Heidelberg. Springer-Verlag.

\bibitem[{Thakur et~al.(2021)Thakur, Reimers, R{\"u}ckl{\'e}, Srivastava, and
  Gurevych}]{thakur2021beir}
Nandan Thakur, Nils Reimers, Andreas R{\"u}ckl{\'e}, Abhishek Srivastava, and
  Iryna Gurevych. 2021.
\newblock \href {https://openreview.net/forum?id=wCu6T5xFjeJ} {{BEIR}: A
  heterogeneous benchmark for zero-shot evaluation of information retrieval
  models}.
\newblock In \emph{Thirty-fifth Conference on Neural Information Processing
  Systems Datasets and Benchmarks Track (Round 2)}.

\bibitem[{Vaswani et~al.(2017)Vaswani, Shazeer, Parmar, Uszkoreit, Jones,
  Gomez, Kaiser, and Polosukhin}]{vaswani2017attention}
Ashish Vaswani, Noam Shazeer, Niki Parmar, Jakob Uszkoreit, Llion Jones,
  Aidan~N Gomez, {\L}ukasz Kaiser, and Illia Polosukhin. 2017.
\newblock Attention is all you need.
\newblock \emph{Advances in Neural Information Processing Systems}, 30.

\bibitem[{Wang et~al.(2022)Wang, Yang, Huang, Jiao, Yang, Jiang, Majumder, and
  Wei}]{wang2022text}
Liang Wang, Nan Yang, Xiaolong Huang, Binxing Jiao, Linjun Yang, Daxin Jiang,
  Rangan Majumder, and Furu Wei. 2022.
\newblock \href {https://arxiv.org/abs/2212.03533} {Text embeddings by
  weakly-supervised contrastive pre-training}.
\newblock \emph{Preprint}, arXiv:2212.03533.

\bibitem[{Wang et~al.(2024)Wang, Yang, Huang, Yang, Majumder, and
  Wei}]{wang-etal-2024-improving-text}
Liang Wang, Nan Yang, Xiaolong Huang, Linjun Yang, Rangan Majumder, and Furu
  Wei. 2024.
\newblock \href {https://doi.org/10.18653/v1/2024.acl-long.642} {Improving text
  embeddings with large language models}.
\newblock In \emph{Proceedings of the 62nd Annual Meeting of the Association
  for Computational Linguistics (Volume 1: Long Papers)}, pages 11897--11916,
  Bangkok, Thailand. Association for Computational Linguistics.

\bibitem[{Wu et~al.(2016)Wu, Schuster, Chen, Le, Norouzi, Macherey, Krikun,
  Cao, Gao, Macherey, Klingner, Shah, Johnson, Liu, Kaiser, Gouws, Kato, Kudo,
  Kazawa, Stevens, Kurian, Patil, Wang, Young, Smith, Riesa, Rudnick, Vinyals,
  Corrado, Hughes, and Dean}]{wu2016google}
Yonghui Wu, Mike Schuster, Zhifeng Chen, Quoc~V. Le, Mohammad Norouzi, Wolfgang
  Macherey, Maxim Krikun, Yuan Cao, Qin Gao, Klaus Macherey, Jeff Klingner,
  Apurva Shah, Melvin Johnson, Xiaobing Liu, {\L}ukasz Kaiser, Stephan Gouws,
  Yoshikiyo Kato, Taku Kudo, Hideto Kazawa, and 12 others. 2016.
\newblock \href {https://arxiv.org/abs/1609.08144} {Google's neural machine
  translation system: Bridging the gap between human and machine translation}.
\newblock \emph{Preprint}, arXiv:1609.08144.

\bibitem[{Xiao et~al.(2024)Xiao, Liu, Zhang, Muennighoff, Lian, and
  Nie}]{xiao2024cpack}
Shitao Xiao, Zheng Liu, Peitian Zhang, Niklas Muennighoff, Defu Lian, and
  Jian-Yun Nie. 2024.
\newblock \href {https://doi.org/10.1145/3626772.3657878} {{C}-{P}ack: Packed
  resources for general {C}hinese embeddings}.
\newblock In \emph{Proceedings of the 47th International ACM SIGIR Conference
  on Research and Development in Information Retrieval}, SIGIR '24, pages
  641--649, New York, NY, USA. Association for Computing Machinery.

\bibitem[{Yang et~al.(2020)Yang, Zhang, Li, Bendersky, and
  Najork}]{yang2020smith}
Liu Yang, Mingyang Zhang, Cheng Li, Michael Bendersky, and Marc Najork. 2020.
\newblock \href {https://doi.org/10.1145/3340531.3411908} {Beyond 512 tokens:
  {S}iamese multi-depth transformer-based hierarchical encoder for long-form
  document matching}.
\newblock In \emph{Proceedings of the 29th {ACM} International Conference on
  Information \& Knowledge Management}, CIKM '20, pages 1725--1734, New York,
  NY, USA. Association for Computing Machinery.

\bibitem[{Yang et~al.(2018)Yang, Qi, Zhang, Bengio, Cohen, Salakhutdinov, and
  Manning}]{yang2018hotpotqa}
Zhilin Yang, Peng Qi, Saizheng Zhang, Yoshua Bengio, William Cohen, Ruslan
  Salakhutdinov, and Christopher~D. Manning. 2018.
\newblock \href {https://doi.org/10.18653/v1/D18-1259} {{H}otpot{QA}: A dataset
  for diverse, explainable multi-hop question answering}.
\newblock In \emph{Proceedings of the 2018 Conference on Empirical Methods in
  Natural Language Processing}, pages 2369--2380, Brussels, Belgium.
  Association for Computational Linguistics.

\bibitem[{Zaheer et~al.(2020)Zaheer, Guruganesh, Dubey, Ainslie, Alberti,
  Ontanon, Pham, Ravula, Wang, Yang, and Ahmed}]{zaheer2020big}
Manzil Zaheer, Guru Guruganesh, Avinava Dubey, Joshua Ainslie, Chris Alberti,
  Santiago Ontanon, Philip Pham, Anirudh Ravula, Qifan Wang, Li~Yang, and Amr
  Ahmed. 2020.
\newblock Big {B}ird: Transformers for longer sequences.
\newblock In \emph{Proceedings of the 34th International Conference on Neural
  Information Processing Systems}, NIPS '20, pages 17283--17297, Red Hook, NY,
  USA. Curran Associates Inc.

\bibitem[{Zhang et~al.(2024)Zhang, Li, Zeng, and Wang}]{stella2024}
Dun Zhang, Jiacheng Li, Ziyang Zeng, and Fulong Wang. 2024.
\newblock \href {https://arxiv.org/abs/2412.19048} {Jasper and {S}tella:
  Distillation of {SOTA} embedding models}.
\newblock \emph{Preprint}, arXiv:2412.19048.

\bibitem[{Zhang et~al.(2025)Zhang, Li, Long, Zhang, Lin, Yang, Xie, Yang, Liu,
  Lin, Huang, and Zhou}]{zhang2025qwen3embeddingadvancingtext}
Yanzhao Zhang, Mingxin Li, Dingkun Long, Xin Zhang, Huan Lin, Baosong Yang,
  Pengjun Xie, An~Yang, Dayiheng Liu, Junyang Lin, Fei Huang, and Jingren Zhou.
  2025.
\newblock \href {https://arxiv.org/abs/2506.05176} {Qwen3 embedding: Advancing
  text embedding and reranking through foundation models}.
\newblock \emph{Preprint}, arXiv:2506.05176.

\bibitem[{Zhu et~al.(2015)Zhu, Kiros, Zemel, Salakhutdinov, Urtasun, Torralba,
  and Fidler}]{zhu2015aligning}
Yukun Zhu, Ryan Kiros, Rich Zemel, Ruslan Salakhutdinov, Raquel Urtasun,
  Antonio Torralba, and Sanja Fidler. 2015.
\newblock \href {https://doi.org/10.1109/ICCV.2015.11} {Aligning books and
  movies: Towards story-like visual explanations by watching movies and reading
  books}.
\newblock In \emph{Proceedings of the 2015 IEEE International Conference on
  Computer Vision (ICCV)}, ICCV '15, pages 19--27, USA. IEEE Computer Society.

\end{thebibliography}

\appendix
\crefalias{section}{appendix}

\section{MTEB Short-Context Generalisation}\label{apx:mteb}

\Cref{tab:mteb} reports retrieval performance on two MTEB
benchmarks (\textsc{ArguAna} and \textsc{FiQA-2018}) consisting
of short queries paired with comparatively short documents.
REIGN underperforms the truncated GTE-small baseline by
5.1~nDCG@10 on \textsc{ArguAna} and 6.8~nDCG@10 on
\textsc{FiQA-2018}. The drop is not a GN artefact, since the
truncated baseline sees the very same single chunk embedding. It
comes from the REIGN encoder, which is trained on multi-chunk
sequences: applied to a lone chunk embedding it has nothing to
aggregate, and the cross-chunk mechanism it relies on cannot
contribute. REIGN targets long-context retrieval, and short-passage
tasks lie outside the intended operating regime.

\begin{table*}[ht]
\centering
\caption{\textbf{Selected MTEB IR results.}
Baselines are evaluated with truncation (512 tokens).
REIGN uses the paper-default \textsc{base-l3} encoder (22M) with
\textsc{GTE-small} as the Guidance Network.
$\Delta$ is computed with respect to the truncated GTE-small
baseline. \textsc{ArguAna} provides only a test split.\\
\textsuperscript{\textdagger}Trained only on GoodWiki-Long; no
MTEB-specific fine-tuning.}
\label{tab:mteb}
\scriptsize
\begin{tabular}{lrrrrrr}
\toprule
\textbf{Model} & \textbf{MAP@1} & \textbf{MAP@10} & \textbf{R@10} &
\textbf{P@10} & \textbf{nDCG@10} & \textbf{$\Delta$ nDCG@10} \\
\midrule
\multicolumn{7}{l}{\textbf{ArguAna}}\\
GTE-small (baseline)
& \textbf{30.73} & \textbf{46.60} & \textbf{83.57} & \textbf{8.36} & \textbf{55.42} & -- \\
REIGN + GTE-small\textsuperscript{\textdagger}
& 26.53 & 41.52 & 78.66 & 7.87 & 50.32 & $-5.10$ \\[2pt]
\midrule
\multicolumn{7}{l}{\textbf{FiQA-2018}}\\
GTE-small (baseline)
& \textbf{19.34} & \textbf{31.72} & \textbf{45.63} & \textbf{11.16} & \textbf{39.31} & -- \\
REIGN + GTE-small\textsuperscript{\textdagger}
& 15.23 & 25.66 & 38.14 & 9.35 & 32.55 & $-6.76$ \\
\bottomrule
\end{tabular}
\end{table*}

\section{Encoder-Capacity Ablation: In-Distribution Sweep}\label{apx:ablation_indist}

\Cref{tab:ablation_indist} reports the full 4$\times$3 sweep of
REIGN encoder configurations against GTE Guidance Networks on the
\textsc{GoodWiki-Long} test set, evaluated at top-$k$~$=$~10. The
$\Delta$ column makes the inverted-U pattern immediately readable.
Under-capacity \textsc{tiny-l1} sits below the bare-GN baseline on all
three GNs; over-parameterised \textsc{large-l4} drops below baseline on
\textsc{GTE-base} and \textsc{GTE-large};
\textsc{small-l2} reaches the per-GN maximum on \textsc{GTE-small}
and \textsc{GTE-large}, with \textsc{base-l3} essentially tied on
\textsc{GTE-base} (visualised in \Cref{fig:ablation}, left).
Read in isolation, this would make \textsc{small-l2} the
paper default. The out-of-distribution head-to-head in
\Cref{apx:ablation_ood_full} reverses that conclusion, since
\textsc{base-l3} wins 11 of 12 cells (the 12th is a $-0.20$
near-tie), with average margins of $+0.71$~nDCG@100 on DAPFAM and
$+0.59$~nDCG@10 on LoCo. The in-distribution \textsc{small-l2}
optimum is therefore an overfitting artefact; \textsc{base-l3}
generalises better under cross-domain transfer, the regime
the paper targets, so an in-distribution capacity ablation
alone is insufficient. We adopt \textsc{base-l3} as the
paper-default REIGN encoder throughout the headline tables.

\begin{figure*}[t]
  \centering
  \includegraphics[width=\linewidth]{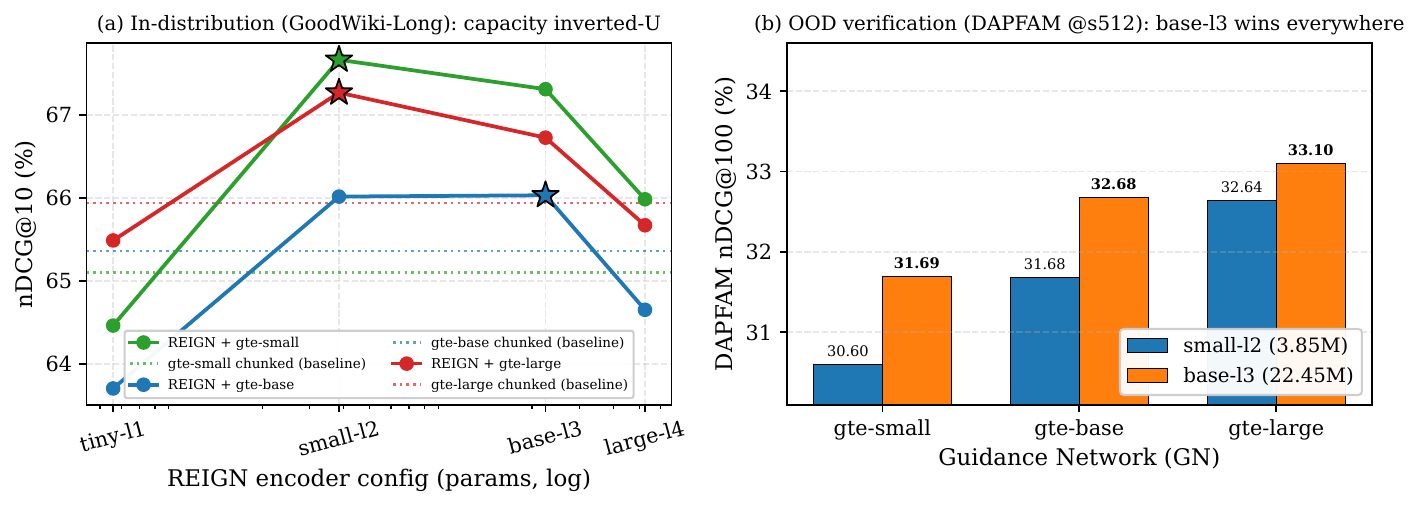}
  \caption{\textbf{Encoder-capacity sweet-spot reverses between
  in-distribution and OOD.} The left panel plots REIGN nDCG@10 on
  \textsc{GoodWiki-Long} vs.\ encoder size, per GN; stars mark per-GN
  maxima, dotted lines are bare-GN baselines. The right panel compares
  DAPFAM @s512 \textsc{small-l2} against \textsc{base-l3}, which wins
  at every GN scale. Full per-cell numbers in \Cref{apx:ablation_ood_full}.}
  \label{fig:ablation}
\end{figure*}

\begin{table*}[ht]
\centering
\caption{\textbf{REIGN config $\times$ GN ablation on \textsc{GoodWiki-Long} test (in-distribution).}
nDCG@10 ($\times$100); $\Delta$ measured against the per-GN bare
chunked-mean-pool baseline. Head dimension is held at $d/H{=}64$ across
configs.\\
$L$/$d$/$H$/FFN = the REIGN encoder's number of Transformer layers,
hidden width, attention heads, and feed-forward dimension.\\
\textbf{Bold} = best REIGN config per GN.}
\label{tab:ablation_indist}
\scriptsize
\setlength{\tabcolsep}{4pt}
\begin{tabular}{lrrrrrrrr}
\toprule
\textbf{REIGN config} & \textbf{Params} & \textbf{$L$} & \textbf{$d$} & \textbf{$H$} & \textbf{FFN} & \textbf{GTE-small ($\Delta$)} & \textbf{GTE-base ($\Delta$)} & \textbf{GTE-large ($\Delta$)} \\
\midrule
GN-only baseline             & --      & --  & --   & --  & --    & 65.10                    & 65.36                    & 65.94 \\
\midrule
\textsc{tiny-l1}             & 0.56M   & 1   & 192  & 3   & 768   & 64.46 ($-0.64$)          & 63.70 ($-1.66$)          & 65.49 ($-0.45$) \\
\textsc{small-l2}            & 3.85M   & 2   & 384  & 6   & 1536  & \textbf{67.67} ($+2.57$) & 66.02 ($+0.66$)          & \textbf{67.27} ($+1.33$) \\
\textsc{base-l3}             & 22.45M  & 3   & 768  & 12  & 3072  & 67.31 ($+2.21$)          & \textbf{66.03} ($+0.67$) & 66.73 ($+0.79$) \\
\textsc{large-l4}            & 52.49M  & 4   & 1024 & 16  & 4096  & 65.98 ($+0.88$)          & 64.65 ($-0.71$)          & 65.67 ($-0.27$) \\
\bottomrule
\end{tabular}
\end{table*}

\section{Encoder-Capacity Ablation: OOD Head-to-Head Detail}\label{apx:ablation_ood_full}

\Cref{tab:ablation_ood} reports the full per-cell out-of-distribution
head-to-head between \textsc{small-l2} and \textsc{base-l3} on DAPFAM
and LoCo at both strides; the bar-chart panel in
\Cref{fig:ablation} (\Cref{apx:ablation_indist}) visualises the
DAPFAM @s512 slice. \textsc{base-l3} wins 11 of 12 cells (the 12th is a $-0.20$
near-tie, LoCo at \textsc{GTE-large} stride 384); average
margins are $+0.71$~nDCG@100 on DAPFAM and $+0.59$~nDCG@10 on LoCo.

\begin{table}[ht]
\centering
\caption{\textbf{Out-of-distribution head-to-head: \textsc{small-l2} vs.\ \textsc{base-l3}.}
DAPFAM nDCG@100 (test split); LoCo macro-avg nDCG@10 (12 subtasks).
\textsc{base-l3} wins 11 of 12 cells. Deltas are computed on
unrounded scores.\\
\textbf{Bold} = winner per cell.}
\label{tab:ablation_ood}
\scriptsize
\setlength{\tabcolsep}{4pt}
\begin{tabular}{llrrr}
\toprule
\textbf{Benchmark} & \textbf{GN, stride} & \textbf{small-l2} & \textbf{base-l3} & \textbf{$\Delta$} \\
\midrule
DAPFAM & GTE-small,384  & 30.68          & \textbf{31.10}  & $+0.42$ \\
DAPFAM & GTE-small,512  & 30.60          & \textbf{31.69}  & $+1.10$ \\
DAPFAM & GTE-base,384  & 31.92          & \textbf{32.72}  & $+0.80$ \\
DAPFAM & GTE-base,512  & 31.68          & \textbf{32.68}  & $+1.00$ \\
DAPFAM & GTE-large,384  & 32.55          & \textbf{33.06}  & $+0.51$ \\
DAPFAM & GTE-large,512  & 32.64          & \textbf{33.10}  & $+0.45$ \\
\midrule
LoCo   & GTE-small,384  & 68.03          & \textbf{68.87}  & $+0.84$ \\
LoCo   & GTE-small,512  & 67.72          & \textbf{68.92}  & $+1.20$ \\
LoCo   & GTE-base,384  & 66.97          & \textbf{67.74}  & $+0.77$ \\
LoCo   & GTE-base,512  & 66.84          & \textbf{67.72}  & $+0.88$ \\
LoCo   & GTE-large,384  & \textbf{70.97} & 70.77           & $-0.20$ \\
LoCo   & GTE-large,512  & 70.42          & \textbf{70.49}  & $+0.07$ \\
\bottomrule
\end{tabular}
\end{table}

\section{Positional-Encoding Ablation}\label{apx:pe_ablation}

We ablate the chunk-position signal referenced in \Cref{ssec:framework},
comparing no positional encoding (our design) against learned absolute
chunk positions and against fixed sinusoidal encodings, both injected
after the GN projection. We train all three arms from scratch under one
protocol, using \textsc{base-l3} on \textsc{GTE-small},
chunk/stride 512, InfoNCE ($\tau{=}0.07$, partial weight 0.5), batch
48 with 47 in-batch negatives, 20 epochs, lr $10^{-5}$, seed 42, and
best-validation selection. The arms sit below the released operating point
(\Cref{tab:gw_lcm_pareto}) because the released checkpoints use the
cosine recipe of \Cref{ssec:objective}.

\begin{table}[ht]
\centering
\caption{\textbf{Chunk-position ablation.} nDCG@10 on
\textsc{GoodWiki-Long} test and LoCo (macro over 12 subtasks),
nDCG@100 on \textsc{DAPFAM} test; deltas vs.\ the no-PE arm, computed
on unrounded scores.}
\label{tab:pe_ablation}
\scriptsize
\setlength{\tabcolsep}{2.5pt}
\begin{tabular}{lrrr}
\toprule
\textbf{Chunk-position signal} & \textbf{GoodWiki} & \textbf{LoCo} & \textbf{DAPFAM} \\
\midrule
None (our design)       & \textbf{64.52} & \textbf{65.25} & \textbf{30.46} \\
Learned absolute        & 64.45 ($-0.08$) & 65.12 ($-0.13$) & 29.88 ($-0.58$) \\
Sinusoidal              & 62.36 ($-2.16$) & 59.48 ($-5.77$) & 21.40 ($-9.06$) \\
\bottomrule
\end{tabular}
\end{table}

\Cref{tab:pe_ablation} supports the no-PE design. Learned absolute
positions are near parity on all three benchmarks, and sinusoidal
encodings are worse everywhere; positional encoding adds parameters
without improving any benchmark.
Unit tests give an independent check. Without a positional signal the
pooled document representation is invariant to chunk order (maximum
deviation $3{\times}10^{-8}$), while the other two arms are
order-sensitive.

\section{Theoretical Analysis: Information Loss and Recovery}\label{apx:theory}

We formalise the information lost in the move from sub-word tokens to
GN embeddings, and separate that fixed loss from the trainable remainder.

\paragraph{Scope.} Theorem~\ref{thm:info_loss} assumes nothing about $\theta$
or about the objective that produced it, so it covers every REIGN
variant, the released checkpoints included, and Proposition~\ref{prop:sufficiency}
states when the trainable part of the gap closes. Only the recovery
bound of Proposition~\ref{prop:nce} depends on the objective. It covers the
InfoNCE-trained arms of \Cref{apx:pe_ablation,apx:loss_ablation},
whose in-batch negatives are draws from the positive marginal as the
proposition assumes; the \textsc{DAPFAM} fine-tuning runs
(\Cref{apx:dapfam}) also use InfoNCE, but their provided corpus
negatives and false-negative masking sit outside that sampling
assumption. The three-way
cosine objective of the released checkpoints (\Cref{ssec:objective})
admits no such bound; Remark~\ref{rem:cosine} states what it does give, with
the empirical case in \Cref{apx:loss_ablation}.

\begin{theorem}[Embedding Refurbishment]\label{thm:info_loss}
Let $X$ be a random variable over natural-language texts, $T=f(X)$ the
(deterministic\footnote{For a fixed tokeniser and vocabulary.}) sub-word
token sequence with $H(T)<\infty$ (which holds whenever document
length is bounded), $E=g(T)$ the frozen GN embedding sequence, and
$R_\theta=h_\theta(E)$ the fixed-size embedding produced by a REIGN encoder
$h_\theta$ with trainable parameters~$\theta$. Because $f$ and $g$ are
deterministic, $X\!\to\!T\!\to\!E\!\to\!R_\theta$ is a Markov chain and
$T$, $E$, $R_\theta$ take countably many values. For every $\theta$ and
every measurable $h_\theta$:

\begin{enumerate}[label=(\roman*)]
\item \emph{(GN coarsening.)} $I(X;E)\le I(X;T)$, and the gap is
      \[
      I(X;T)-I(X;E)\;=\;H(T\mid E),
      \]
      which vanishes iff $g$ is injective on the support of $T$.
\item \emph{(End-to-end loss decomposition.)}
      The total loss in mapping $X$ to $R_\theta$ decomposes as
      \begin{multline*}
        I(X;T)-I(X;R_\theta) \;=\; \\
        \underbrace{I(X;T)-I(X;E)}_{\text{GN coarsening (fixed)}} \\
        {}+\underbrace{I(X;E)-I(X;R_\theta)}_{\text{closed by training}},
      \end{multline*}
      with both terms non-negative. The first is an intrinsic property
      of the frozen GN, fixed at the value in~(i). The second equals
      $I(X;E\mid R_\theta)$ and is the only term $\theta$ moves.
\end{enumerate}
\end{theorem}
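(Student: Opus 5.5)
The whole argument runs on one observation: $T$, $E$ and $R_\theta$ are deterministic functions of $X$, so every conditional entropy given $X$ vanishes. All three take countably many values, and $H(E)\le H(T)<\infty$ and $H(R_\theta)\le H(T)<\infty$ because they are functions of $T$. Hence all the mutual informations are finite and equal plain entropies. Concretely, $I(X;T)=H(T)-H(T\mid X)=H(T)$, and likewise $I(X;E)=H(E)$ and $I(X;R_\theta)=H(R_\theta)$. This finiteness is what licenses subtracting mutual informations, so I would state it explicitly and first.

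For part (i), I would use the chain rule on the pair $(T,E)$. Since $E=g(T)$, we have $H(E\mid T)=0$, so
\[
H(T)=H(T,E)=H(E)+H(T\mid E).
\]
Substituting the identities above gives $I(X;T)-I(X;E)=H(T\mid E)\ge 0$, which is both the inequality and the gap formula. For the equality case, $H(T\mid E)=0$ holds iff $T$ is almost surely a function of $E$. For a countably valued $T$, this is the same as $g$ being injective on the support of $T$, that is, on atoms of positive probability. The converse direction is immediate. For the forward direction, two distinct support points with the same image would give $E$ an atom on which $T$ has nonzero conditional entropy. This support qualification is the only delicate point in (i), and I would phrase the statement with it.

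For part (ii), the decomposition is a telescoping identity: add and subtract $I(X;E)$. The first term is non-negative and fixed by (i). It does not involve $\theta$, because $f$, $g$ and the law of $X$ are frozen. For the second term, I would repeat the argument of (i) with $E$ in place of $T$ and $R_\theta=h_\theta(E)$ in place of $E$, giving
\[
I(X;E)-I(X;R_\theta)=H(E\mid R_\theta)\ge 0.
\]
To match the stated form $I(X;E\mid R_\theta)$, I would apply the chain rule for mutual information:
\[
I(X;E,R_\theta)=I(X;R_\theta)+I(X;E\mid R_\theta)=I(X;E)+I(X;R_\theta\mid E).
\]
The term $I(X;R_\theta\mid E)$ is zero because $R_\theta$ is a function of $E$, which is the Markov property. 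This gives $I(X;E)-I(X;R_\theta)=I(X;E\mid R_\theta)$, and I would also note that this equals $H(E\mid R_\theta)$, since $H(E\mid X,R_\theta)=0$.

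Finally, $\theta$ enters only through $h_\theta$, so the second term is the only part of the decomposition that depends on $\theta$. This yields the claim for every $\theta$ and every measurable $h_\theta$, with no assumption on the training objective. The main obstacle is not any computation but the bookkeeping. One has to make sure finiteness of $H(T)$ propagates to every term, which it does via the data-processing and function-entropy bounds, so that none of the subtractions are of the form $\infty-\infty$. One also has to phrase injectivity relative to the support of $T$ rather than globally.
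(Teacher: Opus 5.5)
Your proposal is correct and follows the paper's proof essentially step for step. Like the paper, you reduce each mutual information to an entropy because $T$, $E$, $R_\theta$ are functions of $X$, use the chain rule on $(T,E)$ with the same support-qualified injectivity argument for (i), and for (ii) expand $I(X;E,R_\theta)$ by the chain rule in both orders with $I(X;R_\theta\mid E)=0$; the only cosmetic difference is that you obtain non-negativity of the trainable gap from the identity $I(X;E)-I(X;R_\theta)=H(E\mid R_\theta)$ rather than from the data-processing inequality, and your explicit finiteness bookkeeping is a welcome addition.
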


\begin{proof}
\leavevmode
\begin{enumerate}[label=(\roman*)]
\item $T$ and $E$ are deterministic functions of $X$, so $I(X;T)=H(T)$
      and $I(X;E)=H(E)$, both finite, and $E=g(T)$ gives $H(T,E)=H(T)$.
      Hence $I(X;T)-I(X;E)=H(T)-H(E)=H(T\mid E)\ge 0$. For countably
      valued $T$, $H(T\mid E)=0$ iff $T$ is almost surely a function of
      $E$, i.e.\ iff token sequences of positive probability receive
      distinct GN embeddings; collisions on the rest are harmless.
\item Both gaps are non-negative by the data-processing inequality along
      $X\!\to\!T\!\to\!E$ and $X\!\to\!E\!\to\!R_\theta$, and adding and
      subtracting $I(X;E)$ in $I(X;T)-I(X;R_\theta)$ gives the
      decomposition. Since $R_\theta$ is a function of $E$ we have
      $I(X;R_\theta\mid E)=0$, so expanding $I(X;E,R_\theta)$ by the
      chain rule in both orders leaves
      $I(X;E)-I(X;R_\theta)=I(X;E\mid R_\theta)$.
\end{enumerate}
\end{proof}

\begin{proposition}[Sufficiency closes the trainable gap]\label{prop:sufficiency}
Under the assumptions of Theorem~\ref{thm:info_loss}, the trainable gap
$I(X;E)-I(X;R_\theta)$ vanishes iff $X \perp\!\!\!\perp E \mid R_\theta$,
that is, iff $R_\theta$ is a sufficient statistic of $E$ for $X$, and in
particular whenever $h_\theta$ is injective on the support of $E$.
\end{proposition}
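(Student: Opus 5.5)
The plan is to reduce everything to the identity already established in part~(ii) of Theorem~\ref{thm:info_loss}, namely
\[
I(X;E)-I(X;R_\theta)\;=\;I(X;E\mid R_\theta),
\]
which holds for every $\theta$ because $R_\theta=h_\theta(E)$ is a function of $E$. With this in hand, the trainable gap vanishes iff the conditional mutual information $I(X;E\mid R_\theta)$ is zero.

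The next step is the standard characterisation of zero conditional mutual information. Since $E$ and $R_\theta$ take countably many values, $I(X;E\mid R_\theta)$ is the expectation over $(E,R_\theta)$ of the Kullback--Leibler divergence between $P_{X\mid E,R_\theta}$ and $P_{X\mid R_\theta}$. This is a non-negative quantity, and it is zero iff
\[
P_{X\mid E=e,R_\theta=r}=P_{X\mid R_\theta=r}
\]
for every $(e,r)$ of positive probability. That condition is exactly $X\perp\!\!\!\perp E\mid R_\theta$. Because $R_\theta$ is a function of $E$, conditioning on $(E,R_\theta)$ is the same as conditioning on $E$. The condition therefore reads $P_{X\mid E}=P_{X\mid R_\theta}$ almost surely, which is the definition of $R_\theta$ being a sufficient statistic of $E$ for $X$. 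Only the countable support of $E$ is needed, so no regular-conditional-probability subtleties arise, and $X$ itself need not be countable.

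For the ``in particular'' clause, the plan is as follows. Suppose $h_\theta$ is injective on the support of $E$. Then $E$ is almost surely a function of $R_\theta$, so $H(E\mid R_\theta)=0$, and
\[
0\le I(X;E\mid R_\theta)\le H(E\mid R_\theta)=0 .
\]
The upper bound uses the finiteness of $H(E)\le H(T)<\infty$ from the theorem's hypotheses. Alternatively, one can argue directly: the $\sigma$-algebras generated by $E$ and by $R_\theta$ coincide up to null sets, so the two conditional laws of $X$ agree.

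The main obstacle is only a measure-theoretic care point. One must make sure that the equivalence ``$I(X;E\mid R_\theta)=0$ iff conditional independence'' is applied in a setting where the quantities are finite and well defined, given that $X$ ranges over texts. This is handled by $I(X;E)=H(E)<\infty$ from the proof of part~(i) together with countability of $E$ and $R_\theta$. Everything else follows directly from Theorem~\ref{thm:info_loss}.
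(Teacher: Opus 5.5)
Your proposal is correct and follows the paper's proof. Like the paper, you reduce the gap to $I(X;E\mid R_\theta)$ via Theorem~\ref{thm:info_loss}(ii), identify its vanishing with $X \perp\!\!\!\perp E \mid R_\theta$, and close the injective case with $0\le I(X;E\mid R_\theta)\le H(E\mid R_\theta)=0$. The only addition is that you derive the zero-conditional-MI step from the KL-divergence representation over the countable support, where the paper simply invokes it as standard.
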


\begin{proof}
The gap equals $I(X;E\mid R_\theta)$ by Theorem~\ref{thm:info_loss}(ii), which
is zero exactly under that conditional independence. Injectivity on the
support of $E$ makes $E$ almost surely a function of $R_\theta$, so
$0\le I(X;E\mid R_\theta)\le H(E\mid R_\theta)=0$.
\end{proof}

\begin{proposition}[Contrastive recovery under InfoNCE]\label{prop:nce}
Let $(E,E^+)$ be a positive pair under a stochastic augmentation
$t\sim\mathcal{T}$ and let $\mathcal{L}_{\mathrm{NCE}}(\theta)$ be the
InfoNCE objective over batches of $B>1$ pairs drawn i.i.d., each anchor
scored against its own positive and $B-1$ independent draws from the
marginal of $E^+$, with critic
$(e^+,r)\mapsto\operatorname{sim}(h_\theta(e^+),r)/\tau$. Then
\[
    I(E^+;R_\theta)\;\ge\;\log B\;-\;\mathcal{L}_{\mathrm{NCE}}(\theta)
\]
for every $\theta$ \citep[Appendix~A.1]{oord2018representation}.
\end{proposition}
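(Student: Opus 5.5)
The approach is the standard van den Oord argument, adapted to the paper's setting where the critic scores $h_\theta(e^+)$ against the anchor representation $r=R_\theta=h_\theta(E)$. Fix $\theta$ and write the critic as $s_\theta(e^+,r)=\operatorname{sim}(h_\theta(e^+),r)/\tau$. For one batch, draw an anchor with representation $r$, its positive $e^+_1$, and $B-1$ negatives $e^+_2,\dots,e^+_B$ i.i.d.\ from the marginal $p(e^+)$, independent of $r$. The InfoNCE loss is then
\[
\mathcal{L}_{\mathrm{NCE}}(\theta)=-\mathbb{E}\Bigl[\log \tfrac{\exp s_\theta(e^+_1,r)}{\sum_{j=1}^B \exp s_\theta(e^+_j,r)}\Bigr].
\]
The goal is to show that $\log B-\mathcal{L}_{\mathrm{NCE}}\le I(E^+;R_\theta)$. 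Here the pair $(E^+,R_\theta)$ has a joint law induced by the augmentation pipeline, and $R_\theta$ is a function of $E$.

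First, I would recast the loss as a classification problem. Among $B$ candidates, exactly one is drawn from the conditional $p(e^+\mid r)$ and the rest from $p(e^+)$, and the task is to identify the positive's index. The Bayes-optimal posterior for index $i$ is proportional to the density ratio $p(e^+_i\mid r)/p(e^+_i)$. Since $\mathcal{L}_{\mathrm{NCE}}$ is the cross-entropy of the softmax classifier induced by $s_\theta$, it is at least the cross-entropy of the Bayes posterior, by Gibbs' inequality applied conditionally on the batch. So it suffices to bound the loss at the optimal critic $s^\star=\log\bigl(p(e^+\mid r)/p(e^+)\bigr)$, up to an additive constant.

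Second, I would evaluate the loss at $s^\star$, following \citep[Appendix~A.1]{oord2018representation}:
\[
\mathcal{L}^\star=\mathbb{E}\log\Bigl[1+\tfrac{p(e^+_1)}{p(e^+_1\mid r)}\sum_{j\ge2}\tfrac{p(e^+_j\mid r)}{p(e^+_j)}\Bigr].
\]
Then $\log B-\mathcal{L}^\star\le I(E^+;R_\theta)$ follows from the one-sample InfoNCE bound in the form of Poole et al.
\[
I\ge \mathbb{E}\Bigl[\tfrac1B\sum_i \log\tfrac{e^{s(e_i,r_i)}}{\frac1B\sum_j e^{s(e_i,r_j)}}\Bigr].
\]
That bound is derived from the Nguyen--Wainwright--Jordan variational bound with a multi-sample baseline plus Jensen. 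Because it holds for every critic, it holds for $s_\theta$ directly, and this actually bypasses the Bayes-optimal step entirely. I would present this rigorous route, since van den Oord's original uses an approximation in which the sum over negatives is replaced by $(B-1)$ times its expectation.

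The main obstacle is matching the orientation and the batch statistic. The bound as stated scores a fixed $r$ against $B$ candidate $e^+$'s, whereas the Poole bound is symmetric in which side is enumerated. I would therefore apply the bound with the roles of $E^+$ and $R_\theta$ chosen so that negatives are drawn from the marginal of $E^+$, exactly as the hypothesis states. I would also use that $I(E^+;R_\theta)$ is symmetric, and that any measurable critic, including $\operatorname{sim}(h_\theta(\cdot),\cdot)/\tau$, is admissible. Finiteness of the mutual information is not needed, since the inequality is trivial if it is infinite.
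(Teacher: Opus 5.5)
Your proposal is correct. In its final form it takes a different route from the paper, which gives no argument of its own beyond pointing to Appendix~A.1 of \citet{oord2018representation}.

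That appendix plugs in the density-ratio critic and then replaces $\sum_{j\ge 2} p(e^+_j\mid r)/p(e^+_j)$ by $(B-1)$ times its expectation. Read literally, it therefore covers only the optimal critic and contains a non-rigorous step. To reach an arbitrary $\theta$ it would need your Gibbs-inequality step $\mathcal{L}_{\mathrm{NCE}}(\theta)\ge\mathcal{L}^\star$, with the positive's slot randomised so that the Bayes posterior really is the normalised density ratio. It would also need an exact replacement for the approximation.

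The route you settle on is the multi-sample NWJ bound of Poole et al.\ (2019). You apply it directly to the critic $\operatorname{sim}(h_\theta(\cdot),\cdot)/\tau$, with $R_\theta$ as the conditioning variable and the candidates drawn from the marginal of $E^+$. This gives the inequality exactly for every measurable critic, hence for every $\theta$. That is what the proposition claims, and what the paper's follow-up remark (the bound holds at every $\theta$, not only at the optimum) relies on. The van den Oord route adds only the intuition that the loss is minimised by the density ratio, so your first two steps can be dropped.

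When you write it out, note that the working step is not Jensen; it goes as follows. First, $I(E^+_{1:B};R_\theta)=I(E^+_1;R_\theta)$, because the negatives are independent of the positive pair. Next, apply NWJ to this enlarged pair with critic $1+\log\bigl(e^{s_\theta(e^+_1,r)}/\tfrac1B\sum_k e^{s_\theta(e^+_k,r)}\bigr)$. Finally, the product-of-marginals penalty equals exactly $1$, by exchangeability of $e^+_{1:B}$ under that product measure.

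Two terminology points: this is Poole et al.'s multi-sample $I_{\mathrm{NCE}}$ bound, not a one-sample bound. Integrability is automatic, since the log-ratio is bounded above by $\log B$ and a cosine critic is bounded.
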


The bound applies at every $\theta$, not only at the optimum, and
tightens as the loss falls, but it certifies at most $\log B$ of
mutual information.

\begin{remark}[What the cosine objective gives]\label{rem:cosine}
The released recipe admits no analogue of Proposition~\ref{prop:nce}, since its
per-pair terms are not normalised over a candidate set and so support
no likelihood-ratio bound on mutual information. It does bear on the
condition of Proposition~\ref{prop:sufficiency}. Consider two documents with
nonzero embeddings $e\neq e'$ that appear together as an
$s{=}{-}1$ pair. Their cost
$c_+=\max\bigl(0,\cos(h_\theta(e),h_\theta(e'))\bigr)$ is at most $1$,
with equality iff the two embeddings point the same way, the collapse
$h_\theta(e)=h_\theta(e')$ included, and strictly below $1$ otherwise.
The negative branch thus penalises, on the negative pairs seen in
training, the collapses that violate the injectivity condition of
Proposition~\ref{prop:sufficiency}. We claim only that
direction of pressure, with no bound on $I(X;E\mid R_\theta)$ and no
property of the minimisers.
\end{remark}

\paragraph{Interpretation.}
Statement~(i) prices the swap of token space for GN embeddings at
$H(T\mid E)$, an \emph{irreversible} loss that no choice of $\theta$
recovers once the GN is fixed. Statement~(ii) leaves the remainder to
training, and Proposition~\ref{prop:sufficiency} names its target. REIGN cannot
recover what the GN discarded, but \emph{can} in principle preserve
everything $E$ carries about $X$, the formal counterpart of the
empirical claim that REIGN's quality is upper-bounded by its GN
(Limitations, ``Coupling to the Guidance Network''). Our two objectives differ in what supports that
``in principle''. InfoNCE brings the quantitative handle of
Proposition~\ref{prop:nce}; the released cosine recipe brings the qualitative
pressure of Remark~\ref{rem:cosine} and the benchmark comparison of
\Cref{apx:loss_ablation}, where it beats every InfoNCE arm we ran.

\section{Efficiency Analysis}\label{apx:efficiency}

\Cref{tab:flops_appendix_8k} reports concrete training and inference FLOPs for representative long-document embedding models under a fixed 8K-token setting with 512-token chunks ($k{=}16$).
For token-level Transformer models, we approximate the cost of a standard encoder layer as
$\mathcal{O}(16nd^2 + 2n^2d)$, where $n$ is the token sequence length, accounting for linear projections in the attention and feed-forward blocks as well as the quadratic self-attention term, following established practices in prior efficiency analyses~\citep{vaswani2017attention,kaplan2020scaling}.
For Longformer-style sparse attention, the quadratic term is reduced to $\mathcal{O}(2nwd)$, where $w$ denotes the local attention window size~\citep{beltagy2020longformer}.

\begin{table*}[ht]
\centering
\caption{\textbf{Concrete FLOPs comparison for long-document embedding methods (per document)} under an 8K-token setting with 512-token chunks ($k{=}16$).
Token-level baselines run self-attention over subword tokens; REIGN
also runs full self-attention but over chunk-level embeddings
(sequence length $k{=}16$ instead of $n{=}512$ or $8K$), and caches
the GN forward pass.
Multiply–add operations count as 2 FLOPs; training FLOPs assume $3{\times}$ forward passes (forward + backward) plus the contrastive-objective overhead at batch size $B{=}48$.}
\label{tab:flops_appendix_8k}
\scriptsize
\begin{tabular}{llcccc}
\toprule
\textbf{Model} & \textbf{Attention Type} & \textbf{Representation} & \textbf{Training TFLOPs} & \textbf{Inference TFLOPs} & \textbf{Scaling} \\
\midrule
GTE (truncated, 512)
& Full self-attn
& Token-level
& 0.657
& 0.219
& $\mathcal{O}(n^2)$ \\

GTE (chunked, $16{\times}512$)
& Full self-attn (repeated)
& Token-level
& 10.514
& 3.505
& $\mathcal{O}(kn^2)$ \\

Longformer (full 8K)
& Sparse windowed ($w{=}512$)
& Token-level
& 10.509
& 3.503
& $\mathcal{O}(nw)$ \\

Full-attention encoder (8K)
& Full self-attn
& Token-level
& 19.785
& 6.595
& $\mathcal{O}(n^2)$ \\

\midrule
REIGN (small-l2, \textit{no cache})
& Full self-attn
& Embedding-level
& 10.514
& 3.505
& $\mathcal{O}(kn^2)$ \\

\textbf{REIGN (small-l2, cached GN)}
& Full self-attn
& Embedding-level
& \textbf{0.00027}
& \textbf{0.000088}
& $\boldsymbol{\mathcal{O}(k^2)}$ \\

\bottomrule
\end{tabular}
\end{table*}

All FLOPs counts treat a multiply–add as two floating-point operations.
Training FLOPs are computed as three times the forward-pass cost to account for forward and backward propagation, with additional overhead from the contrastive objective at batch size $B{=}48$; this $B$ is a fixed analytic setting applied to all systems, not the training batch size of \Cref{apx:goodwiki_train}.
Unless otherwise stated, GTE-large, Longformer-large, and the full-attention 8K encoder assume hidden dimension $d{=}1024$ and $L{=}24$ encoder layers.
REIGN pairs a frozen GTE Guidance Network with a $1024{\rightarrow}384$ projection and a lightweight two-layer (\textsc{small-l2}) encoder over $k{=}16$ chunk-level embeddings.
When Guidance Network embeddings are cached, REIGN avoids repeated token-level computation entirely, resulting in several orders-of-magnitude reductions in both training and inference FLOPs relative to token-level long-context baselines.

\paragraph{Measured cost.}
\Cref{tab:measured_efficiency} complements the analytic model with
measured end-to-end latency and peak memory on a single RTX~4090
(24\,GB). We time 500 corpus documents and 100 queries from the
\textsc{GoodWiki-Long} test split at batch 8, with one warm-up plus
three timed repeats on an otherwise-idle GPU. Cached REIGN serves GN chunk
embeddings from disk and answers queries in 0.40--0.52\,ms
($49$--$229\times$ faster than with the GN run per query, i.e.\ the
uncached rows); the one-time cache build costs
8.9\,ms (GTE-small), 18.0\,ms (GTE-base), and 53.3\,ms (GTE-large)
per document. Uncached REIGN performs identical GN work to the
chunked baseline (verified at 167 chunks in both paths on the sampled
documents), so we report parity; the sub-1.0 ratios reflect baseline
batching, not less work. Two caveats remain. The cached peak-memory
figures still include the GN weights resident on the device, which a
corpus-only indexing deployment would not need, and Jina-v3 could not
run at batch 8 on this card (measured at batch 4, peaking at
18.9\,GB).

\begin{table}[t]
\centering
\caption{\textbf{Measured inference cost} on \textsc{GoodWiki-Long}
(500 documents, 100 queries), single RTX~4090 (24\,GB),
otherwise-idle GPU. Latency is end-to-end per query (encode +
retrieve).\\
\emph{Uncached} runs the Guidance Network per query; \emph{cached}
serves its chunk embeddings from disk.\\
Jina-v3 measured at batch 4 (batch 8 exceeds
24\,GB); all other dense rows at batch 8.}
\label{tab:measured_efficiency}
\scriptsize
\setlength{\tabcolsep}{4pt}
\begin{tabular}{lrrr}
\toprule
\textbf{System} & \textbf{Index (s)} & \textbf{ms/query} & \textbf{Peak GPU (GB)} \\
\midrule
\multicolumn{4}{l}{\emph{Sparse lexical}}\\
BM25 & 0.2 & 6.5 & -- \\
TF-IDF & 0.1 & 1.1 & -- \\
\multicolumn{4}{l}{\emph{Native long-context dense (truncate 8K)}}\\
BGE-M3 & 37.6 & 430.3 & 9.80 \\
Jina-Embeddings-v3 & 26.0 & 336.7 & 18.90 \\
Stella-en-1.5B-v5 & 30.6 & 318.9 & 14.34 \\
Nomic-Embed-v1.5 & 13.4 & 160.5 & 4.81 \\
\multicolumn{4}{l}{\emph{Bare GN (chunked mean-pool, 512)}}\\
GTE-small (chunked) & 4.1 & 23.0 & 0.23 \\
GTE-base (chunked) & 7.7 & 42.7 & 0.61 \\
GTE-large (chunked) & 21.6 & 121.4 & 1.56 \\
BGE-base (chunked) & 7.7 & 42.9 & 0.61 \\
BGE-large (chunked) & 21.6 & 121.4 & 1.56 \\
\multicolumn{4}{l}{\emph{REIGN, uncached GN (cold cache)}}\\
REIGN + GTE-small & 3.3 & 19.8 & 0.35 \\
REIGN + GTE-base & 6.6 & 39.7 & 0.76 \\
REIGN + GTE-large & 19.9 & 118.2 & 1.73 \\
\multicolumn{4}{l}{\emph{REIGN, cached GN embeddings (warm)}}\\
REIGN + GTE-small & 0.2 & 0.40 & 0.24 \\
REIGN + GTE-base & 0.2 & 0.47 & 0.55 \\
REIGN + GTE-large & 0.2 & 0.52 & 1.45 \\
\bottomrule
\end{tabular}
\end{table}

\section{GoodWiki-Long Training Protocol}\label{apx:goodwiki_train}

\paragraph{Optimisation.} REIGN is trained on \textsc{GoodWiki-Long}
with the three-way cosine embedding objective of \Cref{ssec:objective}
(partial weight $\lambda{=}0.5$). Optimiser AdamW with
$\text{lr}{=}10^{-5}$ and weight decay $10^{-4}$; cosine-annealing
schedule. Batch size is 18. Each anchor is paired with its rephrased
positive (target 1), its two topical distractors as partials
(target 0), and 17 in-batch negatives formed by shifting the positives
(target $-1$), giving 360 pairs per optimisation step. Training runs
50 epochs with validation every 4 epochs; the released checkpoints are
the best-validation snapshots (nDCG@10 on the \texttt{val} split). GN
chunk embeddings are precomputed and cached. 16-bit mixed precision,
seed 42.

\paragraph{Encoder configuration.} The paper-default
\textsc{base-l3} REIGN encoder is a 3-layer Transformer over chunk-level
embeddings (22.45M parameters; see \Cref{tab:ablation_indist} for the
full configuration sweep). An internal linear projection adapts the
GN's chunk-embedding dimensionality to the REIGN encoder hidden width
when the two differ (e.g.\ $1024{\rightarrow}768$ for
\textsc{GTE-large}). Chunking uses a sliding window of $K{=}512$
tokens with stride $S{=}K$ unless otherwise specified.

\paragraph{Data and evaluation.}
Training uses the \textsc{GoodWiki-Long-Synthetic} \texttt{train} split
(10{,}000 query-disjoint queries; positives enter the loss at full
weight and distractors as $\lambda$-weighted partials, and the graded
labels also provide the test-time ranking signal documented in
\Cref{tab:goodwiki_long_synthetic_ir}). Reported
test numbers (\Cref{tab:gw_lcm_pareto}) are nDCG@10 on the 5{,}854-query
\texttt{test} split with the full 53{,}562-document corpus; self-matches
are removed. nDCG uses exponential graded gains
($2^{\mathrm{rel}}{-}1$), so the rephrased positive (score 2) carries
gain 3 and each topical distractor (score 1) gain 1; the auxiliary
precision/recall metrics in the released result files use linear
gains $\mathrm{rel}/2$. We apply the same protocol and metric
implementation to every system in \Cref{tab:gw_lcm_pareto} (sparse,
native long-context dense, bare-GN, and REIGN). All runs use a single
24\,GB consumer GPU.

\section{Training-Objective Ablation}\label{apx:loss_ablation}

\Cref{tab:loss_ablation} compares the three-way cosine
recipe of \Cref{ssec:objective} against InfoNCE variants under matched
conditions (\textsc{base-l3} on \textsc{GTE-small}, chunk/stride 512,
20 epochs unless noted, seed 42, best-validation selection). The
InfoNCE grid covers temperature $\tau \in \{0.07, 0.1\}$, distractors
included as graded soft positives ($\alpha{=}0.5$) or excluded
($\alpha{=}0$), batch size 18 vs.\ 48, a warm start from the released
checkpoint, and a 50-epoch run. The cosine recipe wins every
benchmark against every InfoNCE variant; its 20-epoch replica (66.99)
tracks the released 50-epoch checkpoint at the matched stride (67.09
@s512), so training length does not explain the gap. Within the InfoNCE family
the graded distractor signal does not help. The $\alpha{=}0$
arms beat their $\alpha{=}0.5$ counterparts on all three benchmarks
at $\tau{=}0.07$ and on two of three at $\tau{=}0.1$. Whether the
pairwise cosine objective would also gain from excluding
distractors is untested; the released recipe includes them as
$\lambda$-weighted partials. The same ordering holds on
short-text MTEB retrieval (\textsc{ArguAna}/\textsc{FiQA-2018} at
50.94/33.62 nDCG@10 for the 20-epoch cosine arm, best of the arms
measured on MTEB; the released checkpoint's MTEB numbers are in
\Cref{tab:mteb}).

\begin{table}[ht]
\centering
\caption{\textbf{Training-objective ablation} (\textsc{base-l3} on
\textsc{GTE-small}, chunk/stride 512, 20 epochs unless noted).
nDCG@10 on \textsc{GoodWiki-Long} test and LoCo (macro over 12
subtasks), nDCG@100 on \textsc{DAPFAM} test. The reference row reports the released
checkpoint's main-table values at their respective strides (GoodWiki
@s384; 67.09 at the matched s512; LoCo and DAPFAM @s512).\\
$\alpha$ = distractor (soft-positive) weight; bs = batch size;
\textbf{bold} = best ablation arm per column.\\
$^{\dagger}$Validation only degrades after the first validation
epoch; the row is the epoch-1 best-validation snapshot.\\
$^{\ddagger}$Warm start from the released checkpoint; validation
decays monotonically from the warm-start value, so the row is an
epoch-1 snapshot close to the released model.}
\label{tab:loss_ablation}
\scriptsize
\setlength{\tabcolsep}{3pt}
\begin{tabular}{lrrr}
\toprule
\textbf{Objective} & \textbf{GoodWiki} & \textbf{LoCo} & \textbf{DAPFAM} \\
\midrule
Three-way cosine, bs18 (recipe)         & \textbf{66.99} & \textbf{68.26} & \textbf{31.60} \\
Three-way cosine, bs48                  & 65.42 & 66.77 & 31.11 \\
\midrule
InfoNCE $\tau{=}.07$, $\alpha{=}0$, bs48        & 65.91 & 65.94 & 30.71 \\
InfoNCE $\tau{=}.07$, $\alpha{=}0$, bs48, 50 ep & 66.37 & 66.20 & 30.49 \\
InfoNCE $\tau{=}.07$, $\alpha{=}.5$, bs48       & 64.52 & 65.25 & 30.46 \\
InfoNCE $\tau{=}.07$, $\alpha{=}.5$, bs48, warm$^{\ddagger}$ & 65.74 & 65.88 & 30.87 \\
InfoNCE $\tau{=}.1$,\phantom{0} $\alpha{=}0$, bs48         & 64.76 & 64.95 & 30.64 \\
InfoNCE $\tau{=}.1$,\phantom{0} $\alpha{=}.5$, bs48        & 64.25 & 64.98 & 30.19 \\
InfoNCE $\tau{=}.07$, $\alpha{=}.5$, bs18$^{\dagger}$ & 60.16 & 63.15 & 29.48 \\
\midrule
\emph{Released ckpt (cosine, 50 ep)}    & \emph{67.31} & \emph{68.92} & \emph{31.69} \\
\bottomrule
\end{tabular}
\end{table}

\paragraph{Reproducibility note: InfoNCE needs a large negative
pool.} The InfoNCE path contrasts each anchor against the shifted
in-batch positives only, so at batch 18 the per-step pool is 324
pairs, versus 360 pairs including partials on the cosine path. At
$\tau{=}0.07$ this pool is too small for stable training. Validation nDCG@10
peaks at the first validation epoch (0.73) and decays monotonically
to 0.20 by epoch 19, so the batch-18 row above is an
epoch-1 snapshot. Batch 48 (2{,}304 pairs) removes the collapse; the
$\alpha{=}0$ arm improves steadily (0.76$\,\to\,$0.78), while the
$\alpha{=}0.5$ arm still decays after an early peak
(0.76$\,\to\,$0.61). The cosine objective shows no such sensitivity,
and its validation improves monotonically at both batch sizes.

\section{DAPFAM Protocol Details}\label{apx:dapfam}

\paragraph{Dataset.} \textsc{DAPFAM} \citep{ayaou2025dapfam} pairs
1{,}247 query patent families with a corpus of 45{,}336 candidate
families, each represented by the concatenated text of its constituent
patent documents. Relevance is binary, averaging 19.99 cited
positives and 20 provided random non-cited negatives per query.
Sampled whitespace-token length statistics give queries a median of
12{,}330 tokens (p75~21{,}204, p95~60{,}616, max~485{,}712) and corpus
documents a median of 7{,}346 tokens (p75~12{,}262, p95~31{,}952,
max~294{,}417). We use the \textsc{FullText} view throughout, which
is the long-document setting REIGN targets.

\paragraph{Split construction.} \textsc{DAPFAM} ships eval-only.
To train REIGN on patents we built a query-disjoint train/val/test
partition: (i)~stratify the 1{,}247 query patents by quartiles of
positive count; (ii)~split randomly $\sim$70/15/15 with seed 42,
query-disjoint (no query appears in more than one split);
(iii)~scope the original IN-/OUT-IPC partition to the held-out
test set, so \textsc{test\_in} and \textsc{test\_out} are
positives-only subsets of \textsc{test} and the cross-domain
breakdown is computed on identical queries across systems.
The resulting counts are 872/188/187 unique queries
(34{,}869/7{,}520/7{,}480 qrels rows) for train/val/test,
with 180 and 138 queries in \textsc{test\_in} and
\textsc{test\_out} respectively.

\paragraph{Training protocol.} REIGN is fine-tuned on \textsc{DAPFAM}
via a standard query/positive/negative contrastive setup with InfoNCE
and false-negative masking. Binary qrels disable the objective's
partial branch (\texttt{partial-policy}~$=$~\texttt{ignore} in the
released configuration). Negatives are
\textsc{DAPFAM}'s provided random $\texttt{score}{=}0$ families
(\texttt{n-negatives-per-sample}~$=$~4 plus in-batch negatives;
using all 20 provided negatives exceeds 24\,GB of memory at FullText
sequence length). Warm-start runs initialise from the
\textsc{GoodWiki-Long}-trained REIGN checkpoint; cold-start runs
initialise the REIGN encoder from scratch with the same frozen GN.
Optimiser AdamW, lr~$\in\{10^{-5}, 5{\cdot}10^{-6}, 2{\cdot}10^{-6},
10^{-6}\}$, weight decay~$\in\{10^{-4}, 10^{-2}, 10^{-1}\}$,
cosine schedule, temperature 0.07, 16-bit mixed precision,
seed 42, validation every 3 epochs.

\paragraph{Evaluation protocol.} The authoritative metric is
nDCG@100 computed with a qrels-based evaluator over the full
45{,}336-document FullText corpus at top-$k$~$=$~100; the
in-training validation nDCG@10 is an in-batch proxy only and is
\emph{not} the reported number. Self-matches are removed before
scoring: for typically $\sim$12--16 of the 187 test queries the query
document itself appears in the corpus, and those corpus copies are
dropped from the candidate pool. The same protocol is applied
identically to zero-shot REIGN, fine-tuned REIGN, and every sparse
or dense baseline; only the encoder varies.

\paragraph{Significance re-runs.} Two deltas in
\Cref{tab:dapfam_significance} come from re-runs. Jina-v3 scores
32.98 against the main-table 32.97 (fp16 non-determinism) and
Stella-1.5B 33.00 against 32.91 (batch 8 rather than the main-table
batch 4); every other row matches \Cref{tab:dapfam_headline} exactly,
and holding Stella at its main-table 32.91 leaves the verdict
unchanged ($p{=}.843$).

\end{document}